\documentclass[11pt,letterpaper]{article}
\usepackage[margin=1in]{geometry}
\usepackage{times}
\usepackage{natbib}
\setcitestyle{authoryear,round,citesep={;},aysep={,},yysep={;}}

\usepackage{etoolbox}
\newtoggle{arxiv}
\toggletrue{arxiv}

\usepackage{amsmath,amsfonts,bm}

\def\eqref#1{equation~\ref{#1}}

\def\1{\bm{1}}

\DeclareMathAlphabet{\mathsfit}{\encodingdefault}{\sfdefault}{m}{sl}
\SetMathAlphabet{\mathsfit}{bold}{\encodingdefault}{\sfdefault}{bx}{n}

\def\eqref#1{Equation~\textup{(\ref{#1})}}

\usepackage{hyperref}
\usepackage{cleveref}
\usepackage{url}
\usepackage{graphicx}
\usepackage{tikz}
\usepackage{amsmath,amssymb}
\usepackage{amsthm}

\newtheorem{proposition}{Proposition}
\newtheorem{corollary}{Corollary}

\crefname{proposition}{Proposition}{Propositions}
\crefname{theorem}{Theorem}{Theorems}
\crefname{corollary}{Corollary}{Corollaries}
\usetikzlibrary{arrows.meta,calc,positioning}
\usepackage{float}
\usepackage{booktabs,graphicx}
\usepackage{tabularx}

\usepackage{booktabs,tabularx,array,float}
\usepackage[most]{tcolorbox}
\usepackage{autonum}
\usepackage{nicefrac}
\tcbuselibrary{skins}

\definecolor{promptblue}{HTML}{4A8993}
\definecolor{promptbackground}{HTML}{F3F9FA}
\newcolumntype{Y}{>{\raggedright\arraybackslash}X}
\newcommand{\chatline}[1]{{\ttfamily\footnotesize\detokenize{#1}\par}}
\tcbset{promptbox/.style={enhanced,colback=promptbackground,colframe=promptblue,coltitle=white,fonttitle=\bfseries,boxrule=0.7pt,arc=2mm,left=2mm,right=2mm,top=1.5mm,bottom=1.5mm,before skip=6pt,after skip=6pt}}

\usepackage{amsthm}
\usepackage{subcaption}

\definecolor{linkblue}{RGB}{0,51,128}
\hypersetup{
  colorlinks = true,
  linkcolor  = red!65!black,
  citecolor  = blue!75!black,
  urlcolor   = blue!75!black,
  filecolor  = blue!75!black
}
\usepackage[ruled,vlined]{algorithm2e}
\usepackage{xcolor}
\usepackage{tablefootnote}

\crefname{algocf}{Algorithm}{Algorithms}
\crefname{equation}{Equation}{Equations}
\creflabelformat{equation}{(#2#1#3)}

\definecolor{darkbrown}{rgb}{0.4, 0.2, 0.0}
\definecolor{commentblue}{rgb}{0.0, 0.0, 0.8}

\usepackage[most]{tcolorbox}
\usepackage[suppress]{color-edits}
\addauthor{ab}{red}
\addauthor{nd}{teal}
\addauthor{ng}{blue}

\renewcommand{\epsilon}{\varepsilon}

\def\ddefloop#1{\ifx\ddefloop#1\else\ddef{#1}\expandafter\ddefloop\fi}
\def\ddef#1{\expandafter\def\csname bb#1\endcsname{\ensuremath{\mathbb{#1}}}}
\ddefloop ABCDEFGHIJKLMNOPQRSTUVWXYZ\ddefloop

\def\ddefloop#1{\ifx\ddefloop#1\else\ddef{#1}\expandafter\ddefloop\fi}
\def\ddef#1{\expandafter\def\csname frak#1\endcsname{\ensuremath{\mathfrak{#1}}}}
\ddefloop ABCDEFGHIJKLMNOPQRSTUVWXYZ\ddefloop

\def\ddefloop#1{\ifx\ddefloop#1\else\ddef{#1}\expandafter\ddefloop\fi}
\def\ddef#1{\expandafter\def\csname fr#1\endcsname{\ensuremath{\mathfrak{#1}}}}
\ddefloop ABCDEFGHIJKLMNOPQRSTUVWXYZ\ddefloop

\def\ddefloop#1{\ifx\ddefloop#1\else\ddef{#1}\expandafter\ddefloop\fi}
\def\ddef#1{\expandafter\def\csname eul#1\endcsname{\ensuremath{\EuScript{#1}}}}
\ddefloop ABCDEFGHIJKLMNOPQRSTUVWXYZ\ddefloop

\def\ddefloop#1{\ifx\ddefloop#1\else\ddef{#1}\expandafter\ddefloop\fi}
\def\ddef#1{\expandafter\def\csname scr#1\endcsname{\ensuremath{\mathscr{#1}}}}
\ddefloop ABCDEFGHIJKLMNOPQRSTUVWXYZ\ddefloop

\def\ddefloop#1{\ifx\ddefloop#1\else\ddef{#1}\expandafter\ddefloop\fi}
\def\ddef#1{\expandafter\def\csname b#1\endcsname{\ensuremath{\mathbf{#1}}}}
\ddefloop ABCDEGHIJKLMNOPQRSTUVWXYZ\ddefloop

\def\ddefloop#1{\ifx\ddefloop#1\else\ddef{#1}\expandafter\ddefloop\fi}
\def\ddef#1{\expandafter\def\csname bhat#1\endcsname{\ensuremath{\hat{\mathbf{#1}}}}}
\ddefloop ABCDEFGHIJKLMNOPQRSTUVWXYZ\ddefloop

\def\ddefloop#1{\ifx\ddefloop#1\else\ddef{#1}\expandafter\ddefloop\fi}
\def\ddef#1{\expandafter\def\csname btil#1\endcsname{\ensuremath{\tilde{\mathbf{#1}}}}}
\ddefloop ABCDEFGHIJKLMNOPQRSTUVWXYZ\ddefloop

\def\ddefloop#1{\ifx\ddefloop#1\else\ddef{#1}\expandafter\ddefloop\fi}
\def\ddef#1{\expandafter\def\csname bst#1\endcsname{\ensuremath{\mathbf{#1}^\star}}}
\ddefloop ABCDEFGHIJKLMNOPQRSTUVWXYZ\ddefloop

\def\ddefloop#1{\ifx\ddefloop#1\else\ddef{#1}\expandafter\ddefloop\fi}
\def\ddef#1{\expandafter\def\csname bst#1\endcsname{\ensuremath{\mathbf{#1}^\star}}}
\ddefloop abcdeghijklmnopqrstuvwxyz\ddefloop

\def\ddefloop#1{\ifx\ddefloop#1\else\ddef{#1}\expandafter\ddefloop\fi}
\def\ddef#1{\expandafter\def\csname bhat#1\endcsname{\ensuremath{\hat{\mathbf{#1}}}}}
\ddefloop abcdefghijklmnopqrstuvwxyz\ddefloop

\def\ddefloop#1{\ifx\ddefloop#1\else\ddef{#1}\expandafter\ddefloop\fi}
\def\ddef#1{\expandafter\def\csname b#1\endcsname{\ensuremath{\mathbf{#1}}}}
\ddefloop abcdeghijklnopqrstuvwxyz\ddefloop

\def\ddefloop#1{\ifx\ddefloop#1\else\ddef{#1}\expandafter\ddefloop\fi}
\def\ddef#1{\expandafter\def\csname barb#1\endcsname{\ensuremath{\bar{\mathbf{#1}}}}}
\ddefloop abcdefghijklmnopqrstuvwxyz\ddefloop

\def\ddef#1{\expandafter\def\csname c#1\endcsname{\ensuremath{\mathcal{#1}}}}
\ddefloop ABCDEFGHIJKLMNOPQRSTUVWXYZ\ddefloop
\def\ddef#1{\expandafter\def\csname h#1\endcsname{\ensuremath{\widehat{#1}}}}
\ddefloop ABCDEFGHIJKLMNOPQRSTUVWXYZ\ddefloop
\def\ddef#1{\expandafter\def\csname hc#1\endcsname{\ensuremath{\widehat{\mathcal{#1}}}}}
\ddefloop ABCDEFGHIJKLMNOPQRSTUVWXYZ\ddefloop
\def\ddef#1{\expandafter\def\csname t#1\endcsname{\ensuremath{\widetilde{#1}}}}
\ddefloop ABCDEFGHIJKLMNOPQRSTUVWXYZ\ddefloop
\def\ddef#1{\expandafter\def\csname tc#1\endcsname{\ensuremath{\widetilde{\mathcal{#1}}}}}
\ddefloop ABCDEFGHIJKLMNOPQRSTUVWXYZ\ddefloop

\newcommand{\norm}[1]{\left\|#1\right\|}

\newcommand{\ee}{\mathbb{E}}
\newcommand{\pp}{\mathbb{P}}

\newcommand{\lreapo}{\mathcal{L}_{\mathrm{ReAPO}}}

\title{ReAPO: Retry-Aware Policy Optimization for Efficient Exploration in LLM Reasoning}

\author{\small Nghi Dao$^{1}$\thanks{\href{mailto:nmd2167@columbia.edu}{\texttt{nmd2167@columbia.edu}}} \quad Calder Morton-Ferguson$^{2}$\thanks{\href{mailto:calder.morton-ferguson@yale.edu}{\texttt{calder.morton-ferguson@yale.edu}}} \quad Noah Golowich$^{3}$\thanks{\href{mailto:nzg@cs.utexas.edu}{\texttt{nzg@cs.utexas.edu}}}\quad Adam Block$^{1}$\thanks{\href{mailto:adam.block@columbia.edu}{\texttt{adam.block@columbia.edu}}} \\
\small $^{1}$Columbia University \quad $^{2}$Yale University \quad $^{3}$UT Austin}
\date{}

\begin{document}

\maketitle

\begin{abstract}
RLVR improves language-model reasoning, but commonly relies on independent stochastic rollouts for exploration, which can repeatedly sample similar failed strategies. Prior work has seen success with approaches that guide exploration by introducing dependence into the sampling process by including incorrect solutions in the context for subsequent attempts. Still, these methods introduce the additional question of how to assign credit to each attempt.  We propose to view the credit assignment through the lens of \emph{Stochastic Shortest Paths}, a framework in classical reinforcement learning that seeks to minimize the expected cumulative cost to reach a terminal state (in our case, a verified successful response).  We show that, under suitable simplifying assumptions, this perspective naturally leads to a simple modification to GRPO, which we call \textbf{Retry-Aware Policy Optimization (ReAPO)}.  ReAPO allocates additional generation to failed trajectories, allowing successful retries to be reinforced without rewarding preceding failures, and assigns credit in a theoretically principled way to each attempt.
Across Qwen3 and OLMo-2 models from 0.6B to 4B and eight mathematical and scientific reasoning benchmarks, ReAPO improves first-attempt performance over GRPO as well as natural recent baseline algorithms, while additional ablations demonstrate the utility of both serial sampling and ReAPO's credit assignment. On Qwen3-0.6B, ReAPO also outperforms GRPO with twice the group size or twice the training steps, reaching their peak accuracy with 39\% and 31\% fewer generated tokens, respectively.  Taken together, our results suggest that ReAPO is a simple yet effective approach for improving LM reasoning performance through improved exploration and retry-aware credit assignment.\footnote{All code to reproduce our results can be found at \url{https://github.com/Nghi-Dao/ReAPO}.}

\end{abstract}

\section{Introduction}
\label{sec:introduction}

Reinforcement learning with verifiable rewards (RLVR) has emerged as an effective approach for improving the reasoning capabilities of large language models (LLMs) without requiring human-written reasoning traces
\citep{shao2024deepseekmathpushinglimitsmathematical,Guo_2025,wen2025reinforcementlearningverifiablerewards,lambert2025tulu3pushingfrontiers,hu2025openreasonerzeroopensourceapproach,zeng2025simplerlzooinvestigatingtamingzero}.
Given a problem with an easily verifiable answer (e.g., formalized mathematics or compilable code), RLVR samples multiple responses from a policy, evaluates each response, and reinforces responses that receive higher rewards. Most contemporary approaches to RLVR are based on the seminal algorithm Group Relative Policy Optimization (GRPO), which estimates the quality of individual responses by comparing verified rewards within the group of sampled responses.
Despite the success of GRPO, the algorithm is limited by its reliance on \emph{policy stochasticity} to explore semantically distinct reasoning trajectories.  Indeed, like many policy gradient methods, GRPO does not explicitly encourage exploration, the process by which a model actively seeks out novel reasoning strategies; moreover, this deficiency has been noted in prior work that questions the ability of RL methods to drive genuine reasoning improvements beyond the capabilities of the base model \citep{yue2025doesreinforcementlearningreally,karan2025reasoningsamplingbasemodel,huang2025bestofn,chen2026coverage,wen2025reinforcementlearningverifiablerewards,huang2025self}.

\begin{figure}[t]
    \centering
    \resizebox{\textwidth}{!}{%
        \newcommand{\RetryPromptText}{%
  $x_{\mathrm{retry}}={}$\textit{``Your answer is wrong. Reflect on your previous answer and try something else.''}%
}

\definecolor{ink}{HTML}{33363C}
\definecolor{muted}{HTML}{777A80}
\definecolor{teal}{HTML}{438F91}
\definecolor{tealLight}{HTML}{E7F3F3}
\definecolor{orange}{HTML}{DF6634}
\definecolor{orangeLight}{HTML}{FFF2EA}
\definecolor{pink}{HTML}{E17F96}
\definecolor{pinkLight}{HTML}{FCEBF0}
\definecolor{gold}{HTML}{DDB646}
\definecolor{goldLight}{HTML}{FFF7DD}
\definecolor{grayLine}{HTML}{A8AAAE}
\definecolor{grayLight}{HTML}{F3F5F6}
\definecolor{green}{HTML}{319A68}
\definecolor{red}{HTML}{D85061}

\tikzset{
  font=\sffamily,
  module/.style={rounded corners=5pt, fill=white, draw=#1, line width=.9pt,
                 minimum height=9mm, inner xsep=8pt, align=center},
  attemptframe/.style={rounded corners=7pt, fill=grayLight, draw=grayLine,
                       line width=.8pt},
  attempt/.style={rounded corners=3pt, fill=tealLight, draw=grayLine,
                  line width=.65pt,
                  minimum width=1.18cm, minimum height=5.8mm, align=center,
                  inner sep=3pt},
  normalizebox/.style={rounded corners=5pt, fill=goldLight, draw=gold,
                       line width=.9pt, minimum width=3.85cm,
                       minimum height=1.18cm, align=center, inner sep=6pt},
  advantagebox/.style={rounded corners=5pt, fill=pinkLight, draw=pink,
                       line width=.9pt, minimum width=4.15cm,
                       minimum height=1.05cm, align=center, inner sep=6pt},
  grpobox/.style={rounded corners=5pt, fill=tealLight, draw=teal,
                  line width=.9pt, minimum width=3.25cm,
                  minimum height=1.05cm, align=center, inner sep=6pt},
  flow/.style={-{Latex[length=2.3mm,width=1.7mm]}, draw=teal!78,
               line width=1.15pt, rounded corners=4pt},
  collect/.style={draw=pink!78, line width=1.08pt, rounded corners=4pt},
  retry/.style={-{Latex[length=2.2mm,width=1.6mm]}, draw=pink,
                line width=1.05pt, densely dotted, rounded corners=4pt},
  retryline/.style={draw=pink, line width=1.05pt,
                    densely dotted, rounded corners=4pt},
  update/.style={-{Latex[length=2.3mm,width=1.7mm]}, draw=grayLine,
                 line width=1.1pt, dashed, rounded corners=4pt},
  result/.style={circle, minimum size=5.2mm, inner sep=0pt,
                 font=\sffamily\bfseries\small, draw=#1, fill=#1!9, text=#1},
  smalllabel/.style={font=\sffamily\scriptsize, text=muted, align=center},
}

\newcommand{\robothead}{%
  \begin{tikzpicture}[baseline=-.55ex,scale=.43]
    \draw[ink!65,line width=.9pt] (0,.70)--(0,.98);
    \fill[gold] (0,1.06) circle (.09);
    \filldraw[fill=goldLight,draw=gold,line width=.8pt] (-.79,.32) circle (.12);
    \filldraw[fill=goldLight,draw=gold,line width=.8pt] (.79,.32) circle (.12);
    \filldraw[fill=white,draw=ink!65,line width=.9pt,rounded corners=2.5pt]
      (-.68,-.34) rectangle (.68,.70);
    \fill[teal] (-.29,.24) circle (.09);
    \fill[teal] (.29,.24) circle (.09);
    \draw[ink!55,line width=.8pt] (-.25,-.05) .. controls (0,-.20) .. (.25,-.05);
  \end{tikzpicture}%
}

\begin{tikzpicture}[x=1cm,y=1cm]
  \node[module=teal,minimum width=1.22cm,minimum height=7.4mm,
        inner xsep=5pt,inner ysep=3pt] (prompt) at (2.20,5.18)
    {\scriptsize\textbf{Prompt}};
  \node[module=gold,fill=goldLight,minimum width=1.45cm,inner ysep=4pt]
    (llm) at (2.20,3.55) {\robothead\\[-2pt]\scriptsize\textbf{LLM}};
  \draw[flow,shorten >=2.2pt] (prompt.south)--(llm.north);

  \path[attemptframe,fill=tealLight] (5.27,2.53) rectangle (7.76,4.57);
  \path[attemptframe,fill=tealLight] (8.55,2.53) rectangle (11.00,4.57);
  \node[font=\sffamily\small\bfseries,text=teal] at (6.51,4.82)
    {Attempt 1};
  \node[font=\sffamily\small\bfseries,text=pink] at (9.78,4.82)
    {Attempt 2};

  \node[font=\sffamily\scriptsize\bfseries,text=ink,anchor=east] (rollout1)
    at (5.12,4.20) {Rollout 1:};
  \node[font=\sffamily\scriptsize\bfseries,text=ink,anchor=east] (rollout2)
    at (5.12,3.55) {Rollout 2:};
  \node[font=\sffamily\scriptsize\bfseries,text=ink,anchor=east] (rollout3)
    at (5.12,2.90) {Rollout 3:};

  \node[attempt] (attempt11) at (6.30,4.20) {$y_{1,1}$};
  \node[result=red] (reward11) at (7.34,4.20) {$\times$};
  \node[attempt] (attempt12) at (9.55,4.20) {$y_{1,2}$};
  \node[result=green] (reward12) at (10.59,4.20) {$\checkmark$};

  \node[attempt] (attempt21) at (6.30,3.55) {$y_{2,1}$};
  \node[result=green] (reward21) at (7.34,3.55) {$\checkmark$};

  \node[attempt] (attempt31) at (6.30,2.90) {$y_{3,1}$};
  \node[result=red] (reward31) at (7.34,2.90) {$\times$};
  \node[attempt] (attempt32) at (9.55,2.90) {$y_{3,2}$};
  \node[result=red] (reward32) at (10.59,2.90) {$\times$};

  \coordinate (branch) at (3.18,3.55);
  \draw[flow] (llm.east)--(branch)--(3.18,4.20)--(rollout1.west);
  \draw[flow] (llm.east)--(rollout2.west);
  \draw[flow] (llm.east)--(branch)--(3.18,2.90)--(rollout3.west);
  \draw[flow] (rollout1.east)--(attempt11.west);
  \draw[flow] (rollout2.east)--(attempt21.west);
  \draw[flow] (rollout3.east)--(attempt31.west);

    \node[module=pink,fill=pinkLight,text width=6.4cm,
          font=\sffamily\scriptsize,inner xsep=7pt,inner ysep=3pt]
      (retryPrompt) at (8.15,6.05)
      {\RetryPromptText};
  \coordinate (retryJoinOne) at (8.15,4.20);
  \coordinate (retryJoinThree) at (8.15,2.90);
  \draw[retryline,rounded corners=8pt]
    (retryPrompt.south)--(8.15,4.20)--(8.34,4.20);
  \draw[retryline,rounded corners=8pt]
    (retryPrompt.south)--(8.15,2.90)--(8.65,2.90);
  \draw[retry] (reward11.east)--(attempt12.west);
  \draw[retry] (reward31.east)--(attempt32.west);

  \coordinate (collectorTop) at (11.30,4.20);
  \coordinate (collectorTurn) at (11.30,3.55);
  \coordinate (collectorMid) at (11.55,3.55);
  \coordinate (collectorBottom) at (11.30,2.90);
  \draw[collect,rounded corners=5pt]
    (reward12.east)--(collectorTop)--(collectorTurn)--(collectorMid);
  \draw[collect] (reward21.east)--(collectorMid);
  \draw[collect,rounded corners=5pt]
    (reward32.east)--(collectorBottom)--(collectorTurn)--(collectorMid);

  \node[normalizebox] (normalize) at (14.05,3.55)
    {\small\textbf{Normalize attempt rewards}\\[3pt]
     $\displaystyle \hat A_{i,j}=
       \frac{r_{i,j}-\mu_{\mathcal A}}{\sigma_{\mathcal A}+\varepsilon}$};
  \draw[flow] (collectorMid)--(normalize.west);

  \node[advantagebox] (advantage) at (14.05,1.48)
    {\small\textbf{Assign $\hat A_{i,j}$ to response $y_{i,j}$}\\[3pt]
     $\displaystyle \forall\,t\in y_{i,j}:\quad A_t=\hat A_{i,j}$};
  \draw[flow] (normalize)--(advantage);

  \node[grpobox] (grpo) at (7.65,1.48)
    {\small\textbf{GRPO objective}\\[3pt]
     $\displaystyle \max_{\theta}\;\mathcal L_{\mathrm{GRPO}}
       (\theta;\hat A_{i,j})$};
  \draw[flow] (advantage.west)--(grpo.east);

  \draw[update] (grpo.west)--(2.20,1.48)
    --node[right,font=\sffamily\scriptsize\bfseries,text=muted] {Update policy}
    (2.20,2.96)--(llm.south);
\end{tikzpicture}%
    }
    \caption{
    \textbf{Overview of ReAPO.} Each rollout begins with an independent response; failed responses receive a verifier-conditioned retry, while successful rollouts terminate. Advantages are computed for each attempt separately, ensuring that credit is assigned appropriately to both initial and retry attempts, before applying a policy update with the GRPO loss.
    }
    \label{fig:pipeline}
\end{figure}

Critically, GRPO relies on independently sampled rollouts to explore different reasoning trajectories, which can result in repeated exploration of the same incorrect paths and inefficient use of computational resources.  Indeed, if a model is unlikely to solve a given problem, it may well be the case that independent rollouts make the same mistake again and again, leading to no useful training signal for the model to learn from its mistakes and wasting valuable computational resources in generating redundant and uninformative tokens.
Recent work has demonstrated that introducing \emph{dependence} into the generation process can help improve coverage and potentially improve RLVR \citep{snell2024scalingllmtesttimecompute,madaan2023selfrefineiterativerefinementselffeedback,sprague2026skillfactoryselfdistillationlearningcognitive,kumar2024traininglanguagemodelsselfcorrect,li2026quasimotto}, but the best way to do so remains unclear.  We thus ask: \emph{How can we introduce structured dependence in the sampling process to improve exploration, and how should we assign credit to each attempt?}

In this work, we propose \textbf{Retry-Aware Policy Optimization (ReAPO)} (\Cref{alg:our_algorithm}), a simple yet effective modification to GRPO that explicitly encourages exploration by allowing the model to generate retries after failed attempts and assigning credit to each attempt in a theoretically principled manner.  ReAPO is compatible with any verifier-based reward function and does not require additional supervision or auxiliary objectives; moreover, ReAPO can be easily integrated into existing training pipelines, requiring minimal changes to existing code.

In more detail, our motivating question breaks into two parts whose combination yields ReAPO: (i) introducing beneficial dependence into generation and (ii) correctly assigning credit to individual attempts.  Prior work has demonstrated the efficacy of keeping information \emph{in context} to encourage language models to reflect on prior mistakes and correct themselves \citep{kumar2024traininglanguagemodelsselfcorrect,he2026selfdistillationzeroselfrevisionturns,bensal2025reflectretryrewardselfimproving,Guo_2025}.  We build on this idea and propose \emph{failure-conditioned repeated attempts}: given a prompt and response, we grade the answer and, if the answer is incorrect, append an instruction to the model to try again.  This process can then be repeated either until a correct answer is generated or until some maximum budget of attempts (or tokens) is reached.  Critically, unlike GRPO, where each attempt is fully independent, future attempts can learn from prior incorrect solutions, thereby achieving better coverage than mere independent sampling \citep{snell2024scalingllmtesttimecompute}.  While taking this idea to the extreme would entirely replace the parallel sampling of GRPO with serial sampling, in practice, the system's benefits of parallelization lead us to instead rollout multiple sequences of attempts independently, thereby introducing dependence within but not among distinct rollouts.
Failure-conditioning has additional benefits beyond exploration: serial sampling
can be viewed as a way to dynamically assign additional compute (measured in generated tokens) to more difficult problems where early attempts are less likely to achieve success. Differentially allocating compute based on problem difficulty has been observed to lead to significant improvements in token and wall-clock efficiency \citep{yu2025dapo,xiong2025reinforceadaadaptivesamplingframework,microsoft-ai-team-mai-thinking-1-building-a-hill-climbing-machine-2024}, and is likely partially responsible for ReAPO's efficiency.  

Given a group of sequences of attempts as described above, it is not obvious how to assign credit to a given sub-attempt within a larger sequence.  We propose to minimize the \emph{expected number of incorrect attempts before a correct solution is provided}, which is an instance of a \emph{stochastic shortest path} problem from the classical RL literature \citep{6dfe7762-0280-326e-88a5-877d93306e7b}.  Under natural simplifying assumptions, the resulting policy gradient computes \emph{response-level advantages}, i.e., we treat each attempt within a sequence independently and assign each attempt's advantage only to the tokens relevant to that attempt, thereby allowing different attempts within a single sequence to receive different advantages.  This approach encourages the model to produce correct solutions quickly while at the same time allowing the RL process to retain the benefits of conditioned sampling.  The resulting algorithm is represented visually in \Cref{fig:pipeline}, demonstrating how failed responses are incorporated into context and how correct and incorrect responses are scored.

We now summarize the remainder of the paper.  In \Cref{sec:preliminaries} we introduce the prerequisite notions to our derivation of ReAPO.  Then, in \Cref{sec:method}, we formally derive ReAPO's objective under natural simplifying assumptions on the generation process.  In \Cref{sec:experiments}, we investigate ReAPO empirically across four models from two families (0.6B to 4B parameters) and eight mathematical and scientific reasoning benchmarks, comparing against GRPO, compute-matched GRPO variants and other recent baselines, and ablating each component. We briefly discuss related work in \Cref{sec:conclusion}, and defer further discussion, additional experiments and all proofs to the appendix.

\section{Preliminaries}
\label{sec:preliminaries}

\begingroup
\setlength{\abovedisplayskip}{4pt}
\setlength{\belowdisplayskip}{4pt}
\setlength{\abovedisplayshortskip}{4pt}\setlength{\belowdisplayshortskip}{4pt}

We now provide a brief overview of RLVR as well as the popular Group Relative Policy Optimization (GRPO) algorithm.
As is common in RL for language models, we consider a language model to be a \emph{policy} $\pi_\theta$ that maps a prompt $x$ to a distribution over responses $y$.  In this work, we always assume that the model is autoregressive, i.e., the response is generated one token at a time so that for each time step $t$, the model produces a token $y_t \sim \pi_\theta(\cdot | x, y_{<t})$ conditioned on the prompt and all previously generated tokens.  To train the model, we assume that there is a \emph{verifier} $r$ that can determine whether a given response $y$ is correct for a given prompt $x$, and our goal is to optimize the model to maximize the probability of generating correct responses.
More precisely, we let $\mathcal{D}=\{x\}$ be a dataset of problems $x$. Given a problem $x$, the model generates a response $y \sim \pi_\theta(\cdot | x)$, and the verifier $r$ produces a binary reward $r(x,y) \in \{0,1\}$ indicating whether the response is correct, with the goal of maximizing $\mathbb E_{X \sim \mathcal D} \mathbb E_{y \sim \pi_{\theta(\cdot | x)}}\left[ r(x,y) \right]$.

\paragraph{GRPO.} The most popular approach to RLVR is \emph{Group Relative Policy Optimization} (GRPO)~\cite{shao2024deepseekmathpushinglimitsmathematical,Guo_2025}. For a given prompt $x$, GRPO samples a group $\{y_i\}_{i=1}^G\sim\pi_{\theta_{\mathrm{old}}}(\cdot\mid x)$ of $G$ responses from the current policy and obtains rewards $r_i=r(x,y_i)$ for each response. The rewards are then centered and normalized within the group to produce the response-level advantages
\begin{align}\label{eq:grpo_advantage}
\hat A_i=\frac{r_i-\mu_r}{\sigma_r+\epsilon},\qquad
\mu_r=\frac{1}{G}\sum_{i=1}^G r_i,\qquad
\sigma_r=\sqrt{\frac{1}{G}\sum_{i=1}^G(r_i-\mu_r)^2},
\end{align}
where $\epsilon>0$ provides numerical stability.
GRPO maximizes the PPO objective~\cite{schulman2017proximalpolicyoptimization}
\begin{align}
\mathcal{J}_{\mathrm{GRPO}}(\theta)
=\mathbb{E}\left[
\frac{1}{T}\sum_{i=1}^G\sum_{t=1}^{|y_i|}
\left(
\min\left\{
\rho_{i,t}(\theta)\hat A_i,\,
\operatorname{clip}\left(\rho_{i,t}(\theta),1-\varepsilon,1+\varepsilon\right)\hat A_i
\right\}
-\beta d_{\mathrm{KL}}^{i,t}(\theta)
\right)
\right],
\label{eq:grpo_loss}
\end{align}
where $\rho_{i,t}(\theta)=\nicefrac{\pi_\theta(y_{i,t}\mid x,y_{i,<t})}{\pi_{\theta_{\mathrm{old}}}(y_{i,t}\mid x,y_{i,<t})}$ for token $y_{i,t}$ is the importance weight, $T=\sum_{i=1}^G|y_i|$ is the total number of response tokens, $\operatorname{clip}(z,a,b)=\min\{\max\{z,a\},b\}$ clips $z$ to the interval $[a,b]$,  
$\varepsilon>0$ is the clipping parameter, $\beta\geq0$ is the KL-regularization coefficient, and $d_{\mathrm{KL}}^{i,t}(\theta)$ is the per-token KL divergence from the reference policy.  Following recent work \citep{yu2025dapo,tajwar2026maximumlikelihoodreinforcementlearning,hu2025openreasonerzeroopensourceapproach}, we set $\beta=0$ and therefore do not apply explicit reference-policy KL regularization in our experiments; moreover we always have $\theta = \theta_{\mathrm{old}}$.

\paragraph{Stochastic Shortest Paths.} SSP problems are a classical subfield of RL concerned with undiscounted MDPs whose objective is to reach a terminal state while minimizing cumulative cost \citep{6dfe7762-0280-326e-88a5-877d93306e7b}.  In the context of LLM reasoning, we conceptualize each `attempt' as an action and the concatenation of the problem and previous attempts as a state, with the cumulative cost being the total number of attempts until a correct solution (the terminal state) is reached; for further details on classical SSPs, see \citet{bertsekas1996neuro}.

\endgroup

\begin{algorithm}[t]
\DontPrintSemicolon
\SetKwInOut{Require}{Require}

\caption{Retry-Aware Policy Optimization (ReAPO)}
\label{alg:our_algorithm}

\Require{Base model $\pi_\theta$, dataset batch $\mathcal{D} = \{(x, a)\}_{i=1}^N$, verifier $r(y, a)$, group size $G$, max attempts $K$, retry prompt $P_{\text{retry}}$, and stabilizing value $\epsilon > 0$.}

\For{$(x, a) \in \mathcal{D}$}{
    \scalebox{0.93}[1]{\textcolor{commentblue}{\%\% Multi-Attempt Group Rollout}}\;
    \For{$g = 1 \dots G$}{
        Initialize $C \leftarrow [x]$\;
        \For{$k = 1 \dots K$}{
            Sample $y_{g, k} \sim \pi_\theta(\cdot \mid C)$\;
            Compute $r_{g,k} \leftarrow r(y_{g, k}, a)$\;
            \If{$r_{g,k} = 1$ \text{or} $k = K$}{
                $k_g \leftarrow k$\;
                \textbf{break}\;
            }
            Update context $C \leftarrow C \cup y_{g, k} \cup P_{\text{retry}}$\;
        }
    }
    
    \vspace{0.2cm}
    \scalebox{0.93}[1]{\textcolor{commentblue}{\%\% Multi-Attempt Advantage Estimation \& Optimization}}\;
    
    Construct $\mathcal{A} = \{ r_{g,k} \mid g \in 1\dots G, \, k \in 1\dots k_{g} \}$, where:\;
    Compute mean $\mu_\mathcal{A}$ and standard deviation $\sigma_\mathcal{A}$\;
    
    \For{$r_{g,k} \in \mathcal{A}$}{
        $\hat{A}_{g,k} \leftarrow \frac{r_{g,k} - \mu_\mathcal{A}}{\sigma_\mathcal{A} + \epsilon}$ \hfill \textcolor{commentblue}{(\%\% Compute relative advantage)}\;
        Assign $\hat{A}_{g,k}$ uniformly to all tokens in the response $y_{g, k}$\;
    }
}
Update $\theta$ by taking a gradient step in the direction of \eqref{eq:reapo-objective}. \;
\end{algorithm}

\section{ReAPO: Retry-Aware Policy Optimization}
\label{sec:method}

In this section we introduce our algorithm \textbf{Retry-Aware Policy Optimization (ReAPO)}, summarized in \Cref{alg:our_algorithm}. 
ReAPO has two key components on top of the standard GRPO pipeline: (1) \emph{verifier-conditioned exploration}, which allows the model to generate retries after failed attempts, and (2) \emph{attempt-level credit assignment}, which correctly assigns credit to each attempt based on its relative performance within the group.  We describe each component in detail below.

\subsection{Verifier-Conditioned Exploration}
In GRPO, each model rollout is sampled independently and thus can repeatedly explore the same (incorrect and useless) reasoning trajectories, wasting compute and failing to learn from its mistakes.  Motivated by prior work \citep{sprague2026skillfactoryselfdistillationlearningcognitive,kumar2024traininglanguagemodelsselfcorrect,chung2025learningfailuresmultiattemptreinforcement}, we instead propose \emph{verifier-conditioned exploration}.
More precisely, for each problem $x$, ReAPO begins with the same initial group of $G$ rollouts as standard GRPO. Each rollout first generates an independent response $y_{g,1}$. If the verifier marks this response as correct, the rollout terminates. If it is \emph{incorrect}, however, the response is appended to the context together with a \emph{retry prompt}, that explicitly instructs the model to reconsider its previous answer and attempt a different reasoning strategy.  Our ablations demonstrate that the exact wording of the retry prompt is not critical, and we use ``\emph{Your answer is wrong. Reflect on your previous answer and try something else.}'' for most of our experiments.
The model then generates a second response $y_{g,2}$ conditioned on the problem, its previous response, and the verifier feedback. This process can continue until the model produces a correct response or reaches the maximum number of (per-problem) attempts $K$, or hits its token budget.

Because retries are generated only after failures, ReAPO dynamically expands the training group according to the observed difficulty of the problem under the current policy, which is known to help improve efficiency in modern RL for language modeling \citep{zhang2025improvingsamplingefficiencyrlvr,xiong2025reinforceadaadaptivesamplingframework,yang2026depthbreadthsynergyrlvrunlocking}. Problems for which most initial responses are correct receive few additional generations, whereas problems with many incorrect responses receive more retries.  This conditioned sampling allows for greater exploration of the model's reasoning space; for example in \Cref{fig:aime25-sequential-vs-parallel} in the appendix, we observe a substantial reduction in the required number of attempts in order to achieve a target pass@$k$ accuracy relative to na{\"i}ve independent sampling, even for just the base model.

\subsection{Assigning Credit with Multiple attempts}

Verifier-conditioned exploration helps solve the problem of repeated incorrect reasoning by allowing the model to condition its retries on previous failures, but it raises the question of how to properly assign credit to individual attempts within a multi-attempt trajectory.  We propose minimizing the expected number of failed attempts to reach a correct solution as a natural objective.  Note that in so doing, we are encouraging the model to produce correct solutions earlier, including on the first attempt, which is the desideratum during evaluation.

More formally, fix a problem $x$ and let $y_k$ denote the $k$-th attempt generated for this problem, with $r_k = r(x, y_k)$ the binary reward; letting $\tau = \inf\left\{ k \geq 1 \mid r_k = 1 \right\}$ denote the index of the first successful attempt, we aim to minimize $\ee_{\pi_\theta}\left[ \tau - 1 \right]$.  This is an example of a \emph{Stochastic Shortest Path} (SSP) problem, where the terminal state is a correct answer and the cost is constant per action \cite{6dfe7762-0280-326e-88a5-877d93306e7b}.  In order to derive a practical policy gradient, we make the following simplifying assumption: the rewards of attempts are sufficient statistics for the probability of a future attempt being correct, i.e., incorrect attempts affect the probability of a future attempt being correct only through the fact that they are incorrect.  This is certainly an overly conservative assumption as, in practice, knowing that a prior failed attempt is wrong is likely to help the model attempt something qualitatively different, but its simplicity allows for a simple derivation.  To state the next result, we introduce the following notation.  Let $q_m(\theta) = \pp_{\pi_\theta}(r_m = 1 | r_j = 0 \text{ for } j < m)$ be the probability of success on the $m^{th}$ attempt and let $\alpha_m(\pi_\theta) = \pp\left( \tau \geq m \right)$ be the probability that we need at least $m$ attempts in order to be successful.  Finally, let
\begin{align}\label{eq:cmtheta}
    c_m(\pi_\theta) = \sum_{k = m}^\infty \prod_{j = m+1}^k \left( 1 - q_j(\theta) \right)
\end{align}
denote the expected number of additional attempts until a correct response is obtained, starting from attempt $m+1$ given that the first $m$ attempts have failed.  We have the following result.

\begin{proposition}\label{prop:policy_gradient_decomposition}
    Fix a prompt $x$ and let $\pi_\theta$ be the policy generating attempts for this problem. If $\tau$ denotes the index of the first successful attempt and $c_m(\pi_\theta)$ is defined as in \eqref{eq:cmtheta}, then the policy gradient of the expected number of failed attempts can be decomposed as
    \begin{align}\label{eq:full_policy_gradient}
        - \nabla_\theta \ee_{\pi_\theta}[\tau - 1] = \sum_{m = 1}^\infty \alpha_m(\pi_\theta) c_m(\pi_\theta) \nabla q_m(\theta).
    \end{align}
\end{proposition}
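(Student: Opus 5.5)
The plan is to write $\ee_{\pi_\theta}[\tau - 1]$ as an explicit function of the conditional success probabilities $q_1(\theta), q_2(\theta), \dots$, differentiate term by term in each $q_m$, and finish with the chain rule. The simplifying assumption stated before the proposition is what makes this work: it lets us treat $q_m(\theta)$ as the single success probability of attempt $m$ given $m-1$ failures, so that the law of $\tau$ is determined by the sequence $(q_m)$ alone.

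\textbf{Step 1 (tail-sum formula).} Since $\tau \geq 1$ takes values in the positive integers, we have
\begin{align}
\ee[\tau - 1] = \sum_{k \geq 2} \pp(\tau \geq k).
\end{align}
By the chain rule of conditional probability and the sufficient-statistic assumption,
\begin{align}
\pp(\tau \geq k) = \prod_{j=1}^{k-1} \bigl(1 - q_j(\theta)\bigr) = \alpha_k(\pi_\theta).
\end{align}
Hence
\begin{align}
\ee[\tau-1] = \sum_{k\geq 2}\prod_{j=1}^{k-1}(1-q_j) =: F(q_1,q_2,\dots).
\end{align}

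\textbf{Step 2 (partial derivatives).} Fix $m$. The variable $q_m$ appears only in the terms with $k \geq m+1$, and in each such term it appears linearly through the factor $(1-q_m)$. So
\begin{align}
\frac{\partial F}{\partial q_m} = -\prod_{j=1}^{m-1}(1-q_j)\sum_{k\geq m+1}\prod_{j=m+1}^{k-1}(1-q_j).
\end{align}
Reindexing with $k' = k-1$, the sum becomes $\sum_{k' \geq m} \prod_{j=m+1}^{k'}(1-q_j)$, where the $k'=m$ term is the empty product $1$. This is exactly $c_m(\pi_\theta)$ from \eqref{eq:cmtheta}, so
\begin{align}
\frac{\partial F}{\partial q_m} = -\alpha_m(\pi_\theta)\, c_m(\pi_\theta).
\end{align}

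\textbf{Step 3 (chain rule).} We then have
\begin{align}
\nabla_\theta \ee[\tau-1] = \sum_m \frac{\partial F}{\partial q_m} \nabla q_m(\theta),
\end{align}
and negating this gives \eqref{eq:full_policy_gradient}.

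\textbf{Main obstacle.} The real difficulty is justifying the interchange of $\nabla_\theta$ with the infinite sums, both the series defining $F$ and the sum over $m$ in the chain rule. First, I would note that with a finite attempt cap $K$, as in \Cref{alg:our_algorithm}, all sums are finite and there is nothing to prove. For the infinite-horizon statement, I would assume a uniform lower bound $q_j(\theta) \geq \underline{q} > 0$ on a neighborhood of $\theta$, together with bounded $\nabla q_j$. Under these assumptions $\alpha_k \leq (1-\underline{q})^{k-1}$, so the series of derivatives is dominated by $\sum_k k (1-\underline{q})^{k-2} \sup_j \|\nabla q_j\|$. This series converges, which gives uniform convergence of the differentiated series and hence legitimizes term-by-term differentiation.
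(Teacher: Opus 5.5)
Your proposal is correct and follows essentially the same route as the paper: the tail-sum formula, the product representation $\alpha_{k+1}=\prod_{j\le k}(1-q_j)$, term-by-term differentiation, and regrouping the resulting double sum to produce $c_m$, with the interchange justified by the same regularity conditions ($q_m\ge q_{\min}>0$ and $\|\nabla q_m\|\le B$). Computing $\partial F/\partial q_m$ first and then applying the chain rule is just the paper's product-rule-then-swap-summation-order done in the other order, and your explicit domination bound justifies the interchange more carefully than the paper's one-line appeal to the regularity conditions.
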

The proof of this (and all other) results can be found in \Cref{app:theory}.  Unfortunately, the decomposition in \eqref{eq:full_policy_gradient} possesses two deficiencies.  First, it requires knowledge of $c_m(\pi_\theta)$, which depends on the probabilities of success for all future attempts and is therefore not directly observable or easily estimated.  Second, the infinite sum may be impractical to compute in practice, necessitating truncation or approximation strategies.  Thus, in order to derive a practical policy gradient estimator, we make the further assumption that $c_m(\pi_\theta)$ is \emph{independent of } $m$.  Under this assumption, a natural advantage estimation strategy recovers the form of \eqref{eq:full_policy_gradient} with a constant $c_m(\pi_\theta)$.

Indeed, suppose that we roll out $G$ sequences of attempts per problem and denote by \iftoggle{arxiv}{
    \begin{align}
        \cA(x) = \left\{ r_{g,k} | g \in [G] \text{ and } k \geq 1 \right\}
    \end{align}
}{$\cA(x) = \left\{ r_{g,k} | g \in [G] \text{ and } k \geq 1 \right\}$} the collection of rewards from all generated attempts.  We claim that a GRPO-like advantage estimation is proportional in expectation to the true policy gradient of \eqref{eq:full_policy_gradient} under the assumption that $c_m(\pi_\theta)$ is independent of $m$.
\begin{proposition}\label{prop:time_invariant_policy_gradient}
    Let $\cA(x)$ be as above. Assume that $c_m(\pi_\theta) = C(\pi_\theta)$ for all $m$, and let
    \begin{align}
        \nabla_\theta \mathcal{L}(\pi_\theta) = \ee_{\pi_\theta}\left[ \frac 1G \sum_{g=1}^G \sum_{k=1}^{\tau_g} r_{g,k} \cdot \nabla_\theta \log \pi_\theta(y_{g,k} \mid h_{g,k-1}) \right], %
    \end{align}
    where $h_{g,k-1} = (x,y_{g,1},\dots,y_{g,k-1})$. Then it holds that
    $
        -\nabla_\theta \ee_{\pi_\theta}[\tau - 1] = C(\pi_\theta) \cdot \nabla_\theta \mathcal{L}(\pi_\theta).
$
\end{proposition}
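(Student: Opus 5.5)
The plan is to start from \Cref{prop:policy_gradient_decomposition} and substitute $c_m(\pi_\theta)=C(\pi_\theta)$. This reduces the claim to the identity
\begin{align}
\sum_{m=1}^\infty \alpha_m(\pi_\theta)\nabla q_m(\theta) = \nabla_\theta \mathcal{L}(\pi_\theta).
\end{align}
Since the $G$ rollouts are i.i.d., the expectation defining $\nabla_\theta\mathcal{L}$ equals the single-rollout quantity $\ee_{\pi_\theta}\bigl[\sum_{k=1}^{\tau} r_k \nabla_\theta\log\pi_\theta(y_k\mid h_{k-1})\bigr]$, so the factor $\frac1G$ disappears. It therefore suffices to work with one sequence of attempts $y_1,y_2,\dots$, with the convention that attempts after $\tau$ are not generated.

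Next, rewrite the single-rollout sum as $\sum_{m\ge 1}\ee\bigl[\1\{\tau\ge m\}\, r_m \nabla\log\pi_\theta(y_m\mid h_{m-1})\bigr]$, exchanging sum and expectation; this needs a dominated-convergence justification, for instance assuming $\ee[\tau]<\infty$ and bounded score functions. The event $\{\tau\ge m\}$ is exactly $\{r_j=0 \text{ for } j<m\}$, which has probability $\alpha_m$ and is determined by $h_{m-1}$. Conditioning on $h_{m-1}$ and using the score-function identity $\ee_{y_m\sim\pi_\theta(\cdot\mid h_{m-1})}[r_m\nabla\log\pi_\theta(y_m\mid h_{m-1})]=\nabla_\theta \pp_{\pi_\theta}(r_m=1\mid h_{m-1})$, the $m$-th term becomes
\begin{align}
\alpha_m\, \ee\bigl[\nabla_\theta \pp_{\pi_\theta}(r_m=1\mid h_{m-1}) \mid \tau\ge m\bigr].
\end{align}
Here the gradient acts only on the $m$-th conditional, with the history frozen.

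The main obstacle is identifying this last expectation with $\nabla q_m(\theta)$. In general, $q_m(\theta)$ averages $\pp(r_m=1\mid h_{m-1})$ over the failure-conditioned distribution of histories, and that distribution itself depends on $\theta$. I would resolve this with the paper's sufficient-statistic assumption: past attempts influence future success only through their rewards. Under this assumption, $\pp_{\pi_\theta}(r_m=1\mid h_{m-1})$ equals $q_m(\theta)$ on $\{\tau\ge m\}$, with no dependence on the particular failed responses. The conditional expectation is then simply $\nabla q_m(\theta)$, interpreted, as in \Cref{prop:policy_gradient_decomposition}, as the derivative through the policy at step $m$ with earlier attempts held fixed. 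Making this precise is where I would be careful to match the convention of \Cref{prop:policy_gradient_decomposition}.

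Summing over $m$ then gives $\nabla_\theta\mathcal{L}=\sum_m\alpha_m\nabla q_m$. Multiplying by $C(\pi_\theta)$ and invoking \eqref{eq:full_policy_gradient} yields $-\nabla_\theta\ee_{\pi_\theta}[\tau-1]=C(\pi_\theta)\nabla_\theta\mathcal{L}(\pi_\theta)$. As an optional remark, one could note that subtracting the group mean $\mu_{\mathcal A}$ acts as a baseline and preserves the expectation only approximately, but this is not needed for the stated claim, which uses raw rewards $r_{g,k}$.
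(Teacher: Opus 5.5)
Your proposal is correct and follows essentially the same route as the paper: substitute $c_m(\pi_\theta)=C(\pi_\theta)$ into \Cref{prop:policy_gradient_decomposition}, condition on a reached failing history $h_{k-1}$, and use the score-function identity with history-independence to get $\sum_y r(x,y)\nabla_\theta\pi_\theta(y\mid h)=\nabla_\theta q_k(\pi_\theta)$; then weight by $\alpha_k(\pi_\theta)$ and sum over attempts. The ``convention'' you worry about is not an issue: the paper takes $\pp_{\pi_\theta}(r_k=1\mid h)=q_k(\pi_\theta)$ to hold for all $\theta$ in a neighborhood of the point of differentiation, so the frozen-history derivative is simply the ordinary gradient of $q_k$ (your extra care with the $\frac1G$ factor and the sum--expectation interchange fills in details the paper omits).
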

Under the simplifying assumptions above, weighting each generated attempt by its own unnormalized reward yields an update proportional to the negative gradient of the expected number of failed attempts. This result motivates the attempt-level credit assignment used in ReAPO.
Critically, we are only applying the attempt-level advantage to the tokens making up that attempt, rather than distributing the advantage across all tokens in the trajectory \citep{kumar2024traininglanguagemodelsselfcorrect,
chung2025learningfailuresmultiattemptreinforcement}.

\begin{figure}[t]
    \centering
    \includegraphics[width=\textwidth]{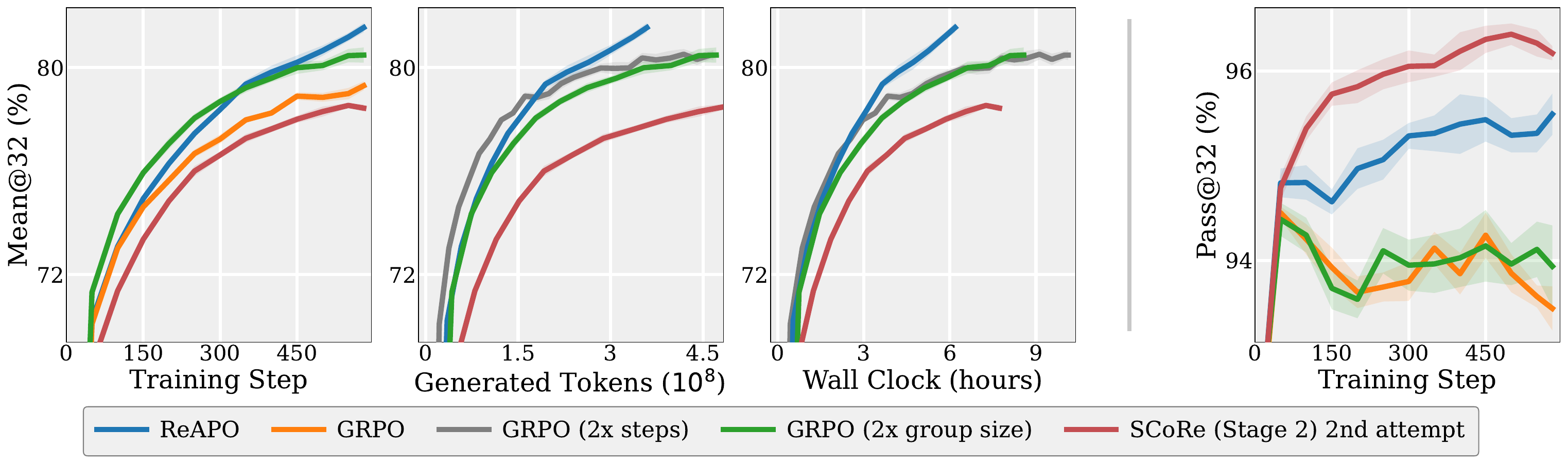}
    \caption{\textbf{Training efficiency on GSM8K-Platinum with Qwen3-0.6B-Base.} The first three panels plot accuracy (mean@32) against optimization steps, cumulative generated response tokens, and wall-clock time, respectively. The final panel plots Pass@32 against optimization steps. The \textsc{GRPO-2$\times$Steps} control extends standard GRPO to 1,160 steps in the compute-normalized panels. Curves show means across eight seeds and shaded regions show standard errors. SCoRe-Stage2 is evaluated on its second response, which is generated regardless of the first response's correctness and without verifier feedback at inference. Every other method is evaluated on its first response.}
    \label{fig:qwen06-efficiency}
\end{figure}

\begin{figure}[t]
    \centering
    \includegraphics[width=\textwidth]{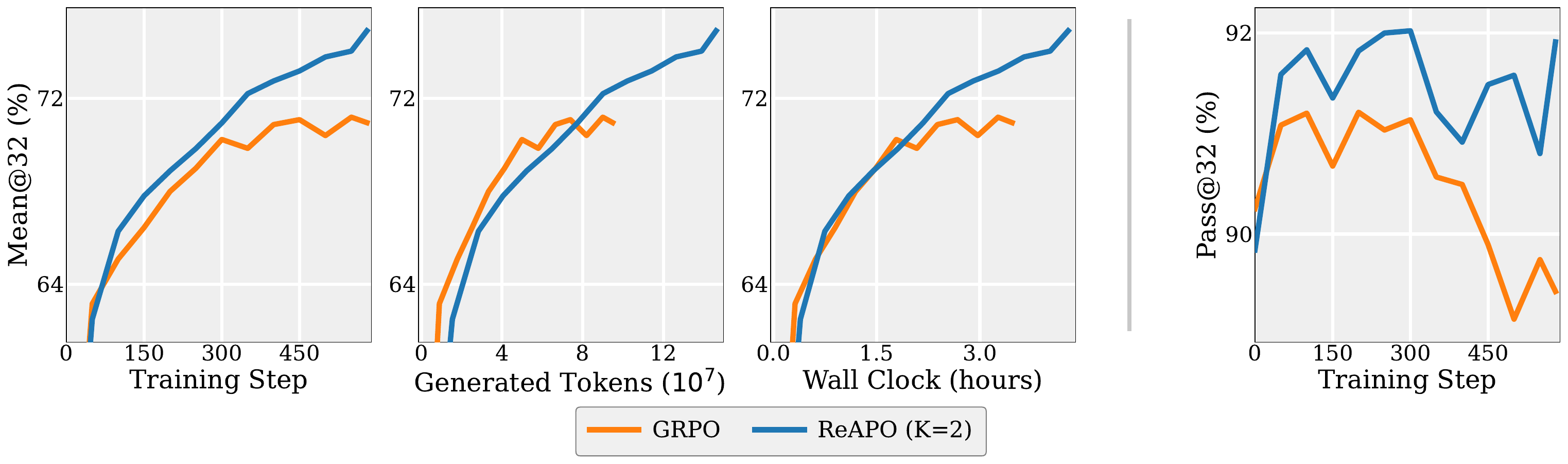}
    \caption{\textbf{Training efficiency on GSM8K-Platinum with OLMo-2-0425-1B-SFT.} The first three panels plot first-attempt Mean@32 against optimization steps, cumulative generated response tokens, and wall-clock time; the final panel plots Pass@32 against optimization steps. Results are from a single training seed.}
    \label{fig:olmo1b}
\end{figure}

\subsection{Implementing ReAPO}

We now turn to the practical implementation of ReAPO, making
two key design modifications to the method suggested by \Cref{prop:time_invariant_policy_gradient}.  First, practical constraints limit the maximum number of attempts per problem that we can actually sample, due to the quadratic increase in computational cost and linear increase in memory with the number of attempts, along with the reduced parallelization.  We thus enforce a \emph{hard cap} $K$ on the number of attempts per problem, which acts to bias the policy gradient estimate by truncating the infinite sum in \eqref{eq:full_policy_gradient} to the first $K$ attempts.  A theoretical account of this bias can be found in \Cref{app:theory}.  Second, we follow GRPO and center and normalize the advantages by the standard deviation (offset by a small term to ensure numerical stability) \citep{shao2024deepseekmathpushinglimitsmathematical}.  Exploring the effect of ablating this normalization in the style of Dr.GRPO \cite{liu2025understandingr1zeroliketrainingcritical} is an interesting direction for future research, although we briefly discuss the theoretical consequences of normalization in \Cref{app:theory}; centering, meanwhile, introduces a small bias of order $\nabla_\theta \ee_{\pi_\theta}[\nicefrac{1}{\tau}]$ (see \Cref{app:theory}).

After these modifications we are left with the following algorithm.  Let $r_{g,k}$ denote the reward obtained for the $k$-th attempt of the $g$-th group.  Letting $\hat{A}_{g,k}$ be defined as in \eqref{eq:grpo_advantage}, where the mean  and standard deviation are computed over all $r_{g,k}$, and $T=\sum_{g=1}^G\sum_{k=1}^{\tau_g}|y_{g,k}|$ is the total number of generated response tokens across all attempts, the ReAPO update direction is given by
\begin{align}\label{eq:reapo-objective}
    \nabla_\theta \mathcal{J}_{\mathrm{ReAPO}}(\theta) = \ee_{x\sim\mathcal D}\ee_{\pi_{\theta}}\left[ \frac{1}{T} \sum_{g = 1}^G \sum_{k = 1}^{\tau_g} \hat{A}_{g,k} \cdot \nabla_\theta \log \pi_\theta(y_{g,k}\mid x,y_{g,<k}) \right]
\end{align}
As clarified in \Cref{sec:preliminaries}, in our experiments we always roll out on-policy (i.e. $\pi_{\theta_{\mathrm{old}}} = \pi_\theta$) and omit the KL regularization term; both can be incorporated into \eqref{eq:reapo-objective} in a straightforward manner.

To reiterate, the advantage $\hat A_{g,k}$ is \emph{assigned only to tokens belonging to the response} $y_{g,k}$. Thus, when an incorrect first attempt is followed by a correct retry, the failed response retains its lower advantage while the successful retry receives a higher advantage. The retry does not retroactively reward the reasoning that led to the initial failure. Likewise, a retry that repeats an incorrect answer receives another zero reward and a negative relative advantage whenever the pooled reward collection contains any successful attempts. If every generated attempt receives the same reward, all normalized advantages are zero and the problem contributes no policy-gradient signal, much as in GRPO.  
The pseudocode for ReAPO is presented in Algorithm~\ref{alg:our_algorithm}.

\section{Experiments}
\label{sec:experiments}

\begin{table}[t]
\centering
\small
\caption{\textbf{First-attempt accuracy across model scales and benchmarks.} Qwen3-1.7B-Base results are mean $\pm$ standard error across seven seeds at the end of training; Qwen3-4B-Base results use three seeds and are evaluated at step 500. 
}
\setlength{\tabcolsep}{2.5pt}
\renewcommand{\arraystretch}{1.08}
\scalebox{0.92}{%
\begin{tabular}{@{}l*{7}{c}@{\hspace{8pt}}c@{}}
\toprule
\textbf{Model / Method} & \textbf{\shortstack{Beyond\\AIME}} & \textbf{\shortstack{AIME\\2024}} & \textbf{\shortstack{AIME\\2025}} & \textbf{\shortstack{HMMT\\2025}} & \textbf{\shortstack{MATH\\500}} & \textbf{Minerva} & \textbf{\shortstack{GPQA\\Diamond}} & \textbf{\shortstack{Avg.\\~}} \\
\midrule
Qwen3-1.7B & $1.2$ & $4.2$ & $3.1$ & $0.2$ & $57.2$ & $21.9$ & $10.8$ & $14.1$ \\
GRPO & $1.3 \pm 0.1$ & $11.5 \pm 0.4$ & $6.2 \pm 0.3$ & $1.3 \pm 0.1$ & $72.0 \pm 0.5$ & $30.9 \pm 0.2$ & $24.1 \pm 0.7$ & $21.1 \pm 0.3$ \\
\textbf{ReAPO} (ours) & $\mathbf{1.6 \pm 0.1}$ & $\mathbf{13.7 \pm 0.3}$ & $\mathbf{7.4 \pm 0.3}$ & $1.3 \pm 0.2$ & $\mathbf{72.9 \pm 0.3}$ & $\mathbf{31.7 \pm 0.2}$ & $\mathbf{25.6 \pm 0.3}$ & $\mathbf{22.0 \pm 0.1}$ \\
\midrule
Qwen3-4B & $4.1$ & $9.4$ & $6.4$ & $0.9$ & $67.9$ & $27.4$ & $17.2$ & $19.0$ \\
GRPO & $7.6 \pm 0.3$ & $19.5 \pm 1.1$ & $16.5 \pm 1.7$ & $5.5 \pm 0.8$ & $83.0 \pm 0.6$ & $40.8 \pm 0.5$ & $28.6 \pm 0.6$ & $28.8 \pm 0.7$ \\
\textbf{ReAPO} (ours) & $\mathbf{8.8 \pm 0.4}$ & $\mathbf{22.4 \pm 0.7}$ & $\mathbf{19.0 \pm 0.9}$ & $\mathbf{9.6 \pm 0.5}$ & $\mathbf{85.3 \pm 0.3}$ & $41.3 \pm 0.3$ & $\mathbf{30.4 \pm 0.7}$ & $\mathbf{31.0 \pm 0.1}$ \\
\bottomrule
\end{tabular}%
}
\label{tab:large-model-results}
\end{table}

We now present a comprehensive evaluation of ReAPO, comparing it against standard GRPO and other relevant baselines across model scales, mathematical reasoning benchmarks, and compute budgets.  We begin by describing our empirical setup (deferring additional details to  \Cref{app:experimental-details}), then we examine the training efficiency in a number of configurations on a small model, before scaling up to larger models and more challenging benchmarks.  We then thoroughly investigate the impact of the choices that underlie ReAPO's design, including the advantages of serial sampling.

\subsection{Experimental Setup}

\paragraph{Models and training data.}
We train OLMo-2-0425-1B-SFT, Qwen3-0.6B-Base, Qwen3-1.7B-Base, and Qwen3-4B-Base~\cite{yang2025qwen3technicalreport}. All methods default to $G=16$ rollouts per problem, which is the default group size used by recent works \cite{tajwar2026maximumlikelihoodreinforcementlearning}, and $K = 2$ attempts in ReAPO due to the limitations of context length. The smallest model is trained on GSM-8K~\citep{cobbe2021trainingverifierssolvemath} with a cumulative limit of 2,048 generated response tokens per trajectory and the 1.7B/4B models on Polaris-53K~\cite{Polaris2025} with a cumulative limit of 4,096 generated response tokens per trajectory. Within each model scale, all methods use the same per-trajectory response-token limit, shared across attempts. All hyperparameters are provided in Tables~\ref{tab:qwen06-hyperparameters} and \ref{tab:qwen-large-hyperparameters} in the appendix.

\paragraph{Evaluation.}
We validate on standard mathematical and scientific reasoning benchmarks. For Qwen3-0.6B-Base, we evaluate on GSM8K-Platinum~\cite{vendrow2025largelanguagemodelbenchmarks}, while for Qwen3-1.7B-Base and Qwen3-4B-Base, we evaluate on BeyondAIME~\cite{bytedance_seed_2025_beyondaime}, AIME 2024, AIME 2025~\cite{aime24,aime25}, HMMT 2025~\cite{dekoninck2026matharena}, MATH-500~\cite{lightman2023letsverifystepstep,hendrycks2021measuringmathematicalproblemsolving}, Minerva~\cite{lewkowycz2022solvingquantitativereasoningproblems}, and GPQA-Diamond~\cite{rein2024gpqa}.

\paragraph{Metrics.}
Unless stated otherwise, \textbf{we evaluate only the first response, without verifier feedback at inference}. 
\emph{Mean@32} is the average accuracy of 32 evaluated responses per problem, while \emph{Pass@32} is the fraction of problems solved by at least one of those responses. These metrics use first responses for all methods except SCoRe-Stage2, for which they use second responses, which are generated regardless of whether its first response is correct. Error bars denote the standard error across seeds.

\subsection{ReAPO Efficiently Outperforms GRPO}
\label{sec:small_model_results}

We begin by examining the training efficiency of ReAPO relative to GRPO on the small Qwen3-0.6B-Base model.  Critically, while the \emph{maximum number of tokens} per trajectory is fixed across methods at each model scale, due to the reattempts present in ReAPO, it may well be that the \emph{average number of tokens} per trajectory is much higher.  Thus, in order to form a fair comparison among methods, we compare the training efficiency in terms of optimizer steps, the total number of generated tokens, and the wall clock time of each run.  In \Cref{fig:qwen06-efficiency}, ReAPO outperforms GRPO at every training step; measured by generated tokens and wall-clock time, GRPO is competitive early in training, but ReAPO overtakes it and reaches a higher final accuracy.  Prior work has found that RL can often improve pass@1 performance without a concomitant increase in higher pass@k accuracy \cite{yue2025doesreinforcementlearningreally}; we thus also examine the effect of ReAPO on pass@32 throughout training and find directionally similar gains.  Because pass@32 is almost saturated at this benchmark, however, the magnitude of the gains is much smaller.

In addition to GRPO, we consider three baselines that isolate the source of ReAPO's gains. First, we train GRPO for twice as many steps, substantially increasing both generated-token and wall-clock compute. Even with this additional compute, GRPO plateaus below ReAPO (\Cref{fig:qwen06-efficiency}). Second, to test whether the gains arise simply from generating more responses rather than from serial sampling, we double the GRPO group size to $G=32$. This matches the maximum number of attempts that ReAPO can generate from $G=16$ initial rollouts with $K=2$, but keeps all responses independent and doubles the compute budget. This baseline again plateaus below ReAPO and is substantially less token-efficient; indeed, \textbf{ReAPO requires 39\% fewer generated tokens to reach the peak accuracy of GRPO with twice the group size and 31\% fewer to reach the best performance of GRPO with twice the steps}.

\begin{figure}[t]
    \centering
    \includegraphics[width=\textwidth]{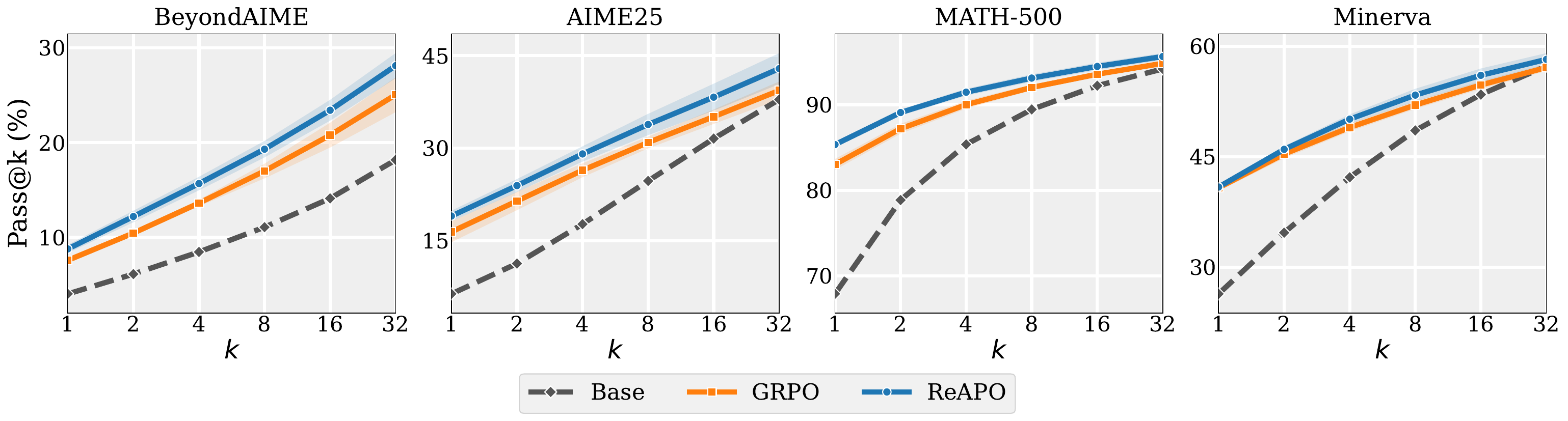}
    \caption{\textbf{First-attempt Pass@$k$ for Qwen3-4B-Base.} Pass@$k$ accuracy for each of BeyondAIME, AIME 2025, MATH-500, and Minerva for ReAPO, GRPO, and the base model. GRPO and ReAPO curves are means across three training seeds, with shaded regions denoting standard errors.
    }
    \label{fig:qwen_4b_best32}
\end{figure}

Third, because our comparison concerns how multi-turn training assigns rewards across attempts, we evaluate a SCoRe Stage-II-only baseline~\citep{kumar2024traininglanguagemodelsselfcorrect}\footnote{SCoRe's Stage I is a separate RL initialization that improves second-attempt performance while KL-constraining first attempts toward the base policy, which we omit; all algorithms start from the same checkpoint.}. Starting from Qwen3-0.6B-Base, this baseline jointly optimizes both attempts and a progress bonus to the second-attempt reward. Its second response is generated regardless of the first response's correctness and requires no verifier feedback at inference, so we report its second-attempt performance. Our results are reported in \Cref{fig:qwen06-efficiency}: due to the lack of verifier-conditioning, SCoRe is substantially less token- and compute-efficient relative to ReAPO and the different credit assignment does not encourage exploration sufficiently to match ReAPO's performance in mean@32, although the pass@32 performance is slightly improved.  A summary of these results can be found in Table~\ref{tab:qwen06-results} in the appendix. To test whether these gains extend beyond the Qwen family, we repeat the GSM8K experiment with OLMo-2-0425-1B-SFT (\Cref{fig:olmo1b}). ReAPO again improves both Mean@32 and Pass@32 over GRPO throughout training, and surpasses GRPO in token and wall-clock-normalized comparisons within GRPO's budget.

\subsection{Gains Persist at Larger Scales}
\label{sec:large_models_results}

While we showed above that ReAPO significantly improves training efficiency relative to GRPO on the small Qwen3-0.6B-Base model, it is important to verify whether these gains persist as we scale to larger models and more challenging benchmarks. In Table~\ref{tab:large-model-results}, we report (first-attempt) accuracy at the selected checkpoints, where at both the 1.7B and 4B scales we see that ReAPO improves on GRPO (and the base model) on nearly every benchmark. Moreover, in \Cref{fig:qwen_4b_best32}, we show that ReAPO maintains its advantage over GRPO even as the inference-time sampling budget increases for the 4B scale.  A similar trend is observed for the 1.7B scale, deferred to Appendix Figure~\ref{fig:qwen_1.7b_best32}.  More fine-grained results are reported in Table~\ref{tab:pass32-results} in the appendix.

\begin{figure}[t]
    \centering
    \includegraphics[width=\textwidth]{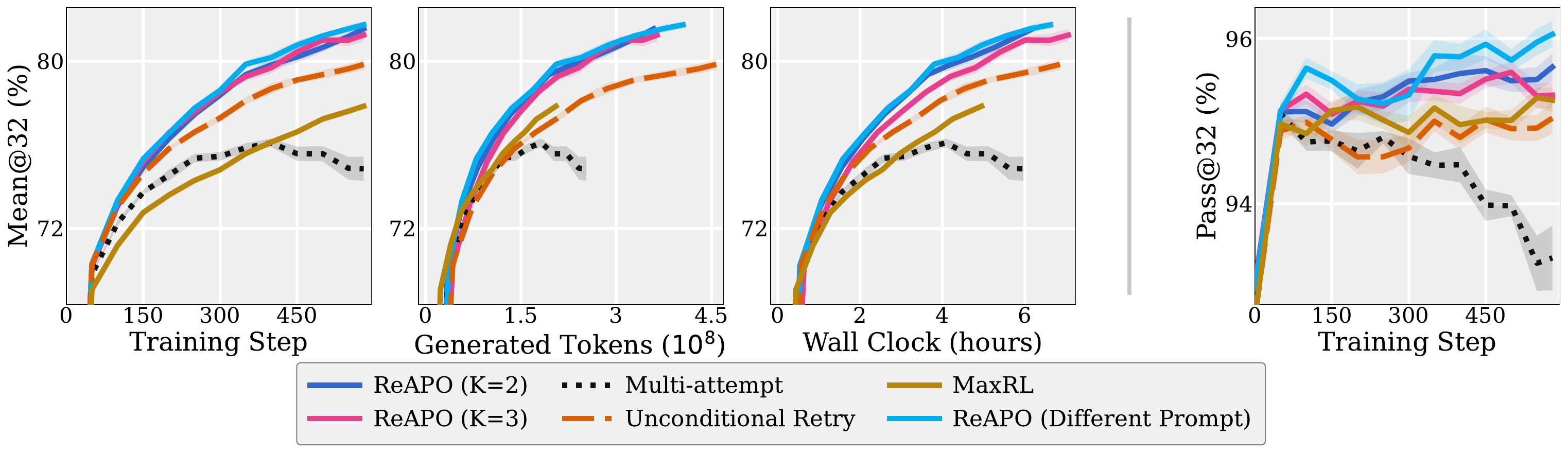}
    \caption{\textbf{Ablations on Qwen3-0.6B-Base and GSM8K-Platinum.} We compare ReAPO with $K \in \{2,3\}$, ReAPO with an alternative retry prompt, an unconditional-retry variant that generates a second attempt regardless of whether the first attempt is correct,
    a multi-turn variant that assigns one advantage to the complete trajectory, and MaxRL. The first three panels report first-attempt Mean@32 against training steps, cumulative generated response tokens, and wall-clock time, respectively; the final panel reports first-attempt Pass@32 against training steps.}
    \label{fig:qwen06-ablation}
\end{figure}

\subsection{Attempt-Level Credit and Conditional Retries Matter}

\paragraph{Additional baselines.}  In \Cref{fig:qwen06-ablation}, we compare ReAPO to two additional baselines. MaxRL \citep{tajwar2026maximumlikelihoodreinforcementlearning} is a recent method that aims to improve pass@k performance by modifying the advantage computation and has seen success in mathematical reasoning tasks.  Multi-Attempt GRPO \cite{chung2025learningfailuresmultiattemptreinforcement} treats an entire sequence of attempts as a single multi-turn rollout. The model repeatedly attempts the problem, conditioned on its previous attempts, until it either produces a correct answer or reaches the maximum number of attempts; the entire trajectory is assigned a single reward. Consequently, failed attempts that precede a successful retry receive the same credit as the successful attempt itself. This baseline isolates the benefit of ReAPO's attempt-level credit assignment from that of verifier-conditioned multi-turn sampling alone.  ReAPO consistently outperforms both baselines, demonstrating the utility of our proposed credit assignment.

\paragraph{Increasing the reattempt budget.} One of the chief hyperparameters of ReAPO is the attempt budget $K$, which determines the maximum number of attempts per trajectory. While increasing $K$ offers more opportunities to correct mistakes, it comes at the cost of more tokens per trajectory. Our results show that increasing to $K=3$ on the 0.6B model yields limited gains (shown in \Cref{fig:qwen06-ablation} and \Cref{tab:qwen06-results}). We believe this is because the 0.6B model saturates on GSM8K by the second attempt, and so the third provides little learning signal. More importantly, by the end of training, more than $90\%$ of trajectories succeed within the first two attempts, suggesting that the third attempt is rarely needed. On the other hand, the 1.7B model yields a more consistent gain in mathematical reasoning benchmarks (shown in \Cref{tab:qwen17b-seed42-eval}), although this gain does not appear to hold for scientific reasoning benchmarks (GPQA-Diamond). Since the result of the 1.7B is on a single seed, we view this as suggestive.

\paragraph{Sensitivity to prompt and verifier conditioning.}  Another hyperparameter of ReAPO is how we prompt the model after a failure.  In \Cref{fig:qwen06-ablation}, we compare a different prompt (given in \Cref{app:ablation-setup}), and find minimal difference in performance.
We then consider a ReAPO variant that generates a second attempt regardless of first-attempt correctness. 
Because this variant retries even after a correct first attempt, the retry is no longer conditioned on a known failure, and part of the additional generation is spent on problems that are already solved. It uses more tokens than ReAPO and reaches lower accuracy (\Cref{fig:qwen06-ablation}).

\subsection{Serial Sampling Encourages Exploration}
\label{sec:retry-pivot}

We conclude our experiments by examining how serial sampling affects exploration. We first examine the extent to which retries lead to more diverse solutions relative to independent sampling.  In \Cref{fig:retry-repetition} (left), we compare the fraction of duplicate (independent) first attempts for ReAPO and GRPO throughout training and see that these are roughly comparable.  In contradistinction to independent sampling, in the middle plot of that figure, we examine the fraction of second-attempt answers that exactly match the failed first-attempt answer, and observe that this quickly falls to near zero, demonstrating that the second attempt is highly likely to explore a different solution.  A table summary of commonly generated phrase can be found in \Cref{tab:qwen4b-attempt-phrases} and \Cref{tab:attempt-phrases}.

Finally, we investigate whether the increased success of ReAPO comes at the cost of solution coverage by examining how pass@$k$ accuracy improves with $k$ for different sampling budgets at the end of training.  We find in \Cref{tab:qwen06-results} that ReAPO maintains or improves solution coverage relative to GRPO and matches MaxRL, which is designed specifically to improve pass@$k$ performance.

\definecolor{reapoPurple}{HTML}{615CED}
\definecolor{reapoPurpleBg}{HTML}{F1F0FF}
\definecolor{reapoBlueBg}{HTML}{EDF7FC}
\definecolor{reapoRed}{HTML}{C94E5B}
\definecolor{reapoRedBg}{HTML}{FFF2F3}
\definecolor{reapoGreen}{HTML}{27865E}
\definecolor{reapoGreenBg}{HTML}{EFFAF5}
\definecolor{reapoGold}{HTML}{A67500}
\definecolor{reapoGoldBg}{HTML}{FFF7D6}
\definecolor{reapoRule}{HTML}{D9DDE3}

\tcbset{
  reapoouter/.style={
    enhanced,
    boxrule=0.7pt,
    arc=1.8mm,
    colback=white,
    colframe=reapoRule,
    left=3mm,
    right=3mm,
    top=1.2mm,
    bottom=1.2mm,
  },
  reapoattempt/.style={
    enhanced,
    boxrule=0.7pt,
    arc=2.2mm,
    left=2.5mm,
    right=2.5mm,
    top=0.9mm,
    bottom=0.9mm,
    before skip=0pt,
    after skip=0pt,
  }
}

\newcommand{\reapoQwenIcon}{%
  \includegraphics[width=7.5mm]{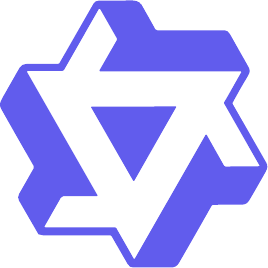}%
}

\begin{figure}[t]
    \centering

    \centering
    \includegraphics[width=\textwidth]{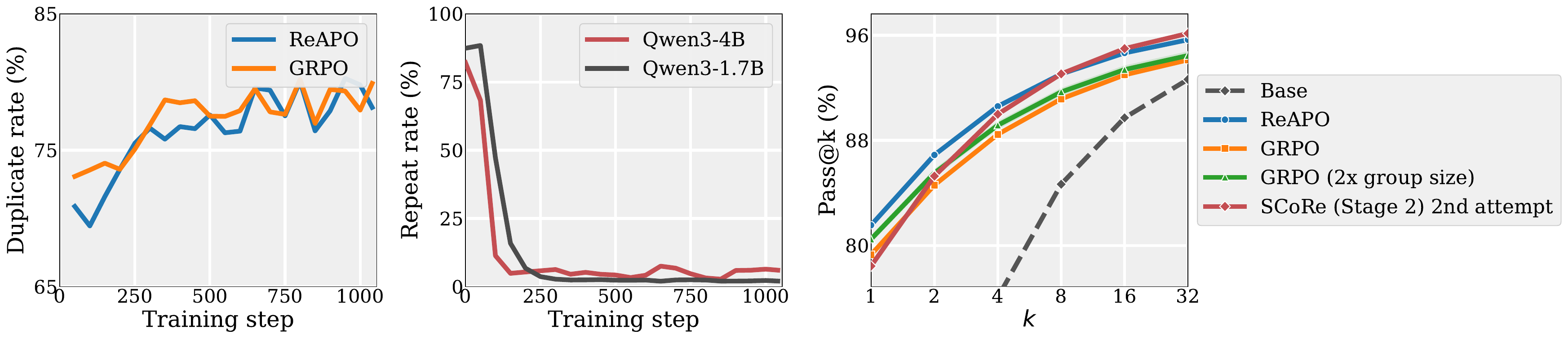}

    \caption{\textbf{Answer diversity, retry behavior, and solution coverage.} (\emph{Left}) Percentage of incorrect first-attempt answers that are duplicates within the same question group on AIME 2025 for Qwen3-4B-Base.
    (\emph{Middle}) The percentage of second-attempt answers that exactly match the failed first-attempt answer.
    (\emph{Right}) Pass@$k$ on GSM8K-Platinum for Qwen3-0.6B-Base. The SCoRe-Stage2 curve uses second responses generated regardless of first-attempt correctness; all other curves use first responses.}
    \label{fig:retry-repetition}
\end{figure}

\section{Discussion and Related Work}
\label{sec:conclusion}

\paragraph{Related work.}  ReAPO builds on prior work that has emphasized the importance of models reflecting on prior failed attempts during reasoning, such as self-correction, self-distillation, and multi-attempt reinforcement learning \cite{madaan2023selfrefineiterativerefinementselffeedback,kumar2024traininglanguagemodelsselfcorrect,Guo_2025,chung2025learningfailuresmultiattemptreinforcement,bensal2025reflectretryrewardselfimproving,sprague2026skillfactoryselfdistillationlearningcognitive}.  In particular, earlier works like \citet{kumar2024traininglanguagemodelsselfcorrect} use multi-turn RL to jointly optimize multiple attempts, but does not use verifier conditioning (leading to potentially inefficient exploration) and also relies on a complicated and somewhat bespoke reward shaping.  Other works like \citet{chung2025learningfailuresmultiattemptreinforcement,bensal2025reflectretryrewardselfimproving,sprague2026skillfactoryselfdistillationlearningcognitive} explore various aspects of multi-attempt reasoning and self-reflection, but typically do not integrate verifier-conditioned retries with attempt-level credit assignment as ReAPO does.  Unlike these other methods, ReAPO's credit assignment is motivated by considering the multi-attempt RL problem as a Stochastic Shortest Paths problem \citep{6dfe7762-0280-326e-88a5-877d93306e7b} and leveraging the rich understanding of such problems from classical RL to inform the design of our algorithm.  Further discussion can be found in \Cref{sec:related-work}.

\paragraph{Limitations and future work.}  We demonstrated the efficacy of ReAPO across a range of mathematical and scientific reasoning tasks, but resource constraints prohibited investigation into additional domains such as code or agentic settings.  We also leave further scaling of the retry budget $K$ with larger inference-time token budgets to future work, which may well offer further gains in reasoning accuracy.

\subsection*{Acknowledgments}
We gratefully acknowledge use of the research computing resources of the Empire AI Consortium, Inc, with support from Empire State Development of the State of New York, the Simons Foundation, and the Secunda Family Foundation.  We also thank the Yale Center for Research Computing  and the Texas Advanced Computing Center (TACC) at The University of Texas at Austin for providing additional computational resources.

\bibliography{refs}
\bibliographystyle{plainnat}

\newpage

\appendix

\clearpage
\section{Related Work}
\label{sec:related-work}

We now discuss in detail recent work related to ReAPO, with a special focus on alternative methods for structured exploration and multi-attempt reasoning in language models.

\paragraph{Knowledge distillation.}
Knowledge distillation provides dense token-level supervision by training a student policy to match the output distribution of a teacher model. On-policy distillation reduces the mismatch between training and inference by evaluating the teacher on trajectories sampled from the student itself \cite{agarwal2024onpolicydistillationlanguagemodels}. Subsequent work extends distillation to models with different tokenizers through token-alignment and approximate likelihood objectives \cite{boizard2025crosstokenizerdistillationuniversallogit,minixhofer2025universalcrosstokenizerdistillationapproximate}. Although these approaches provide substantially denser supervision than binary outcome rewards, they generally require access to a separate teacher model and its token-level predictions.

\paragraph{Self-distillation and self-revision.}
Self-distillation removes the external teacher by using the same model under different conditioning contexts as both teacher and student. Recent methods construct teacher policies using privileged reasoning traces, expert demonstrations, or rich environment feedback
\cite{zhao2026selfdistilledreasoneronpolicyselfdistillation,hubotter2026reinforcementlearningselfdistillation,shenfeld2026selfdistillationenablescontinuallearning}. Other approaches reorganize a model's own generations to teach cognitive behaviors such as verification, backtracking, and retrying, or train a reviser to transform binary rewards into dense token-level supervision
\cite{sprague2026skillfactoryselfdistillationlearningcognitive,he2026selfdistillationzeroselfrevisionturns}. These methods reduce dependence on a stronger teacher, but typically introduce a separate supervised or distillation objective and may require demonstrations, successful traces, or an additional revision-training stage. In contrast, ReAPO uses only verifier rewards and directly optimizes the policy through an attempt-level GRPO objective.

A further relevant line of work is the STaR (Self-Taught Reasoner) framework of \citet{zelikman2022star} and its descendants \citep{he2025star,li2025start,zelikman2024quiet}, which is an iterative self-training procedure for learning chain-of-thought reasoning.  The model generates an answer and, when this answer is incorrect, is prompted with the correct answer and asked for a rationalization for the correct answer.  The hint is then removed and the reasoning trace is kept as SFT data, before the process is iterated.  Like ReAPO, this approach leverages verifier-conditioned sampling, but the way in which it assigns `credit' (and indeed the entire framework around rationalization) is very different from ReAPO's attempt-level advantage assignment.

\paragraph{Stochastic shortest paths.}
Stochastic shortest-path problems are decision processes in which a policy seeks to reach a terminal state while minimizing the expected cost accumulated along the way \cite{6dfe7762-0280-326e-88a5-877d93306e7b}. In our setting, the state includes the problem and preceding failed attempts, a generated response either reaches the terminal state when verified correct or leads to a retry state, and each failed response incurs a unit cost. This gives the idealized objective $\mathbb{E}_{\pi_\theta}[\tau-1]$, the expected number of failures before the first success. In general, the stochastic shortest-path formulation also accounts for how an attempt affects future retry states. The simplifying assumptions in \Cref{sec:method} let us use this perspective to motivate attempt-level credit assignment; our finite-budget, GRPO-style update is the practical algorithm built from that motivation.

\paragraph{Multi-attempt reinforcement learning.}
SCoRe \cite{kumar2024traininglanguagemodelsselfcorrect} trains a single language model to intrinsically correct its own responses using on-policy, multi-turn reinforcement learning. Its first stage constrains the first-attempt policy to remain close to the base model while optimizing the second attempt. Its second stage jointly optimizes both attempts and augments the second-attempt reward with a progress bonus proportional to $r_2-r_1$. This procedure discourages behavior collapse and teaches the model to make productive revisions. SCoRe also shows that, for a fixed number of generated solutions, combining independent sampling with sequential self-correction can outperform fully parallel sampling at inference time.

ReAPO differs from SCoRe in both its feedback setting and optimization objective. SCoRe produces a second response regardless of the correctness of the first and does not reveal the first response's verifier outcome in its revision instruction. ReAPO generates a retry only following a verifier-confirmed failure, making the additional attempt explicitly conditioned on evidence that the previous solution failed. Rather than using a separate initialization stage and a manually weighted progress bonus, ReAPO pools the rewards of all generated attempts and assigns each attempt's resulting group-relative advantage only to its own response tokens.

~\cite{chung2025learningfailuresmultiattemptreinforcement} formulate reasoning as a multi-attempt task and optimize the resulting
interaction using multi-turn reinforcement learning. However, their method does not construct separate group-relative advantages for different attempt stages. ~\cite{bensal2025reflectretryrewardselfimproving} generate a self-reflection after an incorrect response and reward the reflection tokens when the subsequent retry succeeds. This trains the model to produce useful reflections but does not directly assign separate advantages to the answer tokens of each attempt.

Closest to our work is the recent, independent work of \citet{ildiz2026learning}, which also studies verifier-conditioned multi-attempt reasoning and assigns credit to individual attempts. Their calibrated method optimizes Ver@$K$, the probability that at least one of the first $K$ attempts succeeds. ReAPO is instead motivated by minimizing the expected number of failed attempts before success and primarily evaluates first-response performance. This distinction matters when only the first response is scored or reliable verifier feedback is unavailable at inference. On MATH, CAL-GRPO's Ver@2 advantage over naive attempt-level GRPO comes primarily from stronger correction after a failed first attempt, while the naive variant attains higher first-attempt success. Their MATH comparison includes trajectory-level, naive attempt-level, and calibrated GRPO on multi-attempt rollouts, but no independent single-attempt GRPO control; our experiments include standard GRPO trained and evaluated on the same models and benchmarks as ReAPO. More concretely, CAL-GRPO normalizes rewards within each attempt stage and weights attempts by the estimated probability that later attempts fail, whereas ReAPO pools rewards across generated attempts and assigns each advantage only to its corresponding response tokens (\Cref{sec:method}).

Together, these methods show that conditioning later generations on earlier attempts can improve reasoning. ReAPO specifically combines retries triggered by verifier-confirmed failures with pooled attempt-level normalization, providing direct contrast between successful and unsuccessful attempted responses.

\clearpage
\section{Discussion and Future Work}
\label{sec:discussion}

\paragraph{Normalization across attempts.} Standard GRPO normalizes rewards among responses generated from the same prompt, whereas ReAPO pools initial responses with retries that are additionally conditioned on a failed response and verifier feedback. The attempts in the expanded group therefore do not share identical conditioning contexts. For open-ended reasoning problems, where many distinct trajectories can lead to either correct or incorrect answers, observing one failed solution may provide relatively little information about the correct solution. Nevertheless, this assumption may become problematic in restricted answer spaces. For example, in multiple-choice or true--false tasks, revealing that an answer is incorrect can substantially narrow the remaining possibilities without requiring improved reasoning. In such settings, gains could partly reflect answer elimination rather than structured exploration. Future work could study how this affects performance and possible ways to account for this.

\paragraph{Diminishing returns from additional attempts.} Sequential retries introduce additional generation cost and training latency. Moreover, each successive attempt is generated only for the subset of trajectories that failed every preceding attempt, so as early attempts solve easy questions, later attempts are concentrated on increasingly difficult problems and may have a low probability of success. This behavior is already visible in our three-attempt ablation on the 0.6B model: first-attempt accuracy exceeds $80\%$, cumulative accuracy after the second attempt exceeds $90\%$, and consequently fewer than $10\%$ of trajectories remain eligible for a third attempt. The third attempt therefore provides little additional training signal and yields no clear improvement over the two-attempt configuration. Future work could study how this method scales as the number of attempts increases.

\paragraph{Token allocation and truncation.} ReAPO allocates a fixed cumulative response-token budget to each trajectory, shared across its attempts. This can be inefficient when an early attempt consumes a large number of tokens before failing, particularly when the complete interaction is constrained by a shared token budget. In the Qwen3-4B-Base experiments, more than $50\%$ of ReAPO trajectories reach the maximum generation length near the end of training, compared with under $10\%$ for GRPO (Figure~\ref{fig:training-dynamics}). This rise in truncation coincides with ReAPO's sharper decline in validation performance after approximately 600 steps; GRPO also declines, but without a comparable rise in truncation. A separate token budget for each attempt, or an adaptive allocation that reserves sufficient capacity for retries, could allow later attempts to complete their reasoning even when earlier attempts are long.

\paragraph{Verifier reliability and task coverage.} ReAPO assumes that the verifier can reliably identify an incorrect response. A false-negative verification causes the model to receive misleading failure feedback and may incorrectly penalize a valid solution, while a false positive prevents a potentially useful retry. Our experiments focus primarily on mathematical and scientific reasoning tasks with automatically checkable answers. Evaluating robustness to noisy or learned verifiers, and extending the method to code generation, tool use, and domains with less reliable outcome feedback, are important directions for future work.

\paragraph{Compatibility with other policy-optimization methods.} Verifier-conditioned rollout collection and attempt-level credit assignment are not specific to the standard GRPO objective. The same construction could be combined with alternative normalization and optimization methods, including Dr.GRPO ~\cite{liu2025understandingr1zeroliketrainingcritical}, DAPO~\cite{yu2025dapo}, and MaxRL~\cite{tajwar2026maximumlikelihoodreinforcementlearning}. Studying these combinations could determine whether retry-aware exploration provides complementary improvements when paired with objectives designed to address other weaknesses of GRPO.

\clearpage
\section{Experimental Details}
\label{app:experimental-details}

Except for the training of SCoRe Stage-II baseline, we adopt the training and evaluation hyperparameters used by MaxRL \cite{tajwar2026maximumlikelihoodreinforcementlearning}.

\subsection{Qwen3-0.6B Experiments}
\label{app:qwen06}

The Qwen3-0.6B-Base was trained \cite{yang2025qwen3technicalreport} on the GSM8K training set \cite{cobbe2021trainingverifierssolvemath} and evaluated on GSM8K-Platinum \cite{vendrow2025largelanguagemodelbenchmarks}. The string ``Let's think step by step and output the final answer within \texttt{\textbackslash boxed\{\}}.'' was appended to each problem and format the resulting
conversation using the Qwen3 tokenizer chat template. If the first response is incorrect, the retry instruction is appended as a new user message. An example is shown below.

\begin{figure}[H]
\begin{tcolorbox}[promptbox,title={Qwen3-0.6B and OLMo-2-0425-1B-SFT Chat Template}]
\chatline{<|im_start|>user}
\chatline{Natalia sold clips to 48 of her friends in April,}
\chatline{and then she sold half as many clips in May. How}
\chatline{many clips did Natalia sell altogether in April}
\chatline{and May?}
\medskip
\chatline{Let's think step by step and output the final}
\chatline{answer within \boxed{}.}
\chatline{<|im_end|>}
\chatline{<|im_start|>assistant}
\chatline{{first attempt}}
\chatline{<|im_end|>}
\chatline{<|im_start|>user}
\chatline{Your answer is wrong. Reflect on your previous}
\chatline{answer and try something else.}
\chatline{<|im_end|>}
\chatline{<|im_start|>assistant}
\chatline{{retry}}
\chatline{<|im_end|>}
\end{tcolorbox}
\end{figure}

We implement all experiments using VERL \cite{sheng2024hybridflow} and train each model on four NVIDIA B200 GPUs provided by Empire AI \cite{Bloom2025EmpireAI}. Training is fully on-policy, so we do not use importance ratios or associated clipping. For all Qwen3-0.6B runs except the SCoRe Stage-II-only baseline, we disable the KL penalty and entropy bonus. The SCoRe baseline starts from Qwen3-0.6B-Base without Stage I training and follows the Stage II objective and SCoRe-specific hyperparameters reported for MATH by \citet{kumar2024traininglanguagemodelsselfcorrect}, including the progress-bonus weight $\alpha=10$ and reference-policy KL weight $\beta_1=0.01$. Training rollouts are sampled at temperature $1.0$, while validation uses temperature $0.6$, top-$p=0.95$, and no top-$k$ restriction.

\begin{table*}[t]
\centering
\small
\caption{\textbf{Qwen3-0.6B-Base training hyperparameters.} Results are aggregated across eight training seeds. The maximum-attempt setting $K$ applies to ReAPO; GRPO uses one attempt. The KL is 0 for every experiment besides SCoRe. These default values are used for all 0.6B experiments unless stated otherwise.}
\vspace{8pt}
\setlength{\tabcolsep}{5pt}
\renewcommand{\arraystretch}{1.08}
\begin{tabularx}{\textwidth}{@{}lYlY@{}}
\toprule
\textbf{Hyperparameter} & \textbf{Value} & \textbf{Hyperparameter} & \textbf{Value} \\
\midrule
Training dataset & GSM8K & Training hardware & $4\times$ NVIDIA B200 \\
Group size & $16$ & Maximum attempts & $2$ \\
Maximum prompt length & $1{,}024$ & Maximum response length & $2{,}048$ \\
Training batch size & $64$ & Actor learning rate & $1\times10^{-6}$ \\
Learning-rate warmup & $0$ steps & Gradient clipping & $0.3$ \\
KL-loss coefficient & $0$ & Entropy coefficient & $0$ \\
Training temperature & $1.0$ & Validation samples & $32$ per prompt \\
Validation temperature & $0.6$ & Validation top-$p$ & $0.95$ \\
Validation top-$k$ & None & Step & $580$ \\
\bottomrule
\end{tabularx}
\label{tab:qwen06-hyperparameters}
\end{table*}

\subsection{Qwen3-1.7B-Base and Qwen3-4B-Base Experiments}
\label{app:qwen-large}

We train Qwen3-1.7B-Base and Qwen3-4B-Base \cite{yang2025qwen3technicalreport} on Polaris-53K \cite{Polaris2025}. We use the answer provided by Polaris-53K as the reference answer and use \texttt{math\_verify} to assign a binary correctness reward to each attempt separately.

We append ``Please reason step by step, and put your final answer within \texttt{\textbackslash boxed\{\}}.'' to each problem and format the resulting conversation using the Qwen3 tokenizer chat template. If the first response is incorrect, the retry instruction is appended as a new user message. An example is shown below. For evaluation, we sample 256 responses per problem for BeyondAIME, AIME 2024, AIME 2025, and HMMT 2025, and 64 responses per problem for MATH-500, Minerva, and GPQA-Diamond. At the selected checkpoint, we run a separate high-sample evaluation for Pass@$k$ and report its corresponding first-attempt mean accuracy in Table~\ref{tab:large-model-results}.

\begin{figure}[H]
\begin{tcolorbox}[promptbox,title={Qwen3-1.7B/4B-Base Chat Template}]
\chatline{<|im_start|>user}
\chatline{All the complex roots of (z + 1)^4 = 16z^4,}
\chatline{when plotted in the complex plane, lie on a}
\chatline{circle. Find the radius of this circle.}
\medskip
\chatline{Please reason step by step, and put your final}
\chatline{answer within \boxed{}.}
\chatline{<|im_end|>}
\chatline{<|im_start|>assistant}
\chatline{{first attempt}}
\chatline{<|im_end|>}
\chatline{<|im_start|>user}
\chatline{Your answer is wrong. Reflect on your previous}
\chatline{answer and try something else.}
\chatline{<|im_end|>}
\chatline{<|im_start|>assistant}
\chatline{{retry}}
\chatline{<|im_end|>}
\end{tcolorbox}
\end{figure}

We implement all experiments using VERL \cite{sheng2024hybridflow} and train each model on four NVIDIA B200 GPUs provided by Empire AI \cite{Bloom2025EmpireAI}. We use the same training configuration for Qwen3-1.7B-Base and Qwen3-4B-Base. Training is fully on-policy, so we do not use importance ratios or associated clipping. We also disable the KL penalty and entropy bonus. Training rollouts are sampled at temperature $1.0$, while validation uses temperature $0.6$, top-$p=0.95$, and no top-$k$ restriction. 

\begin{table*}[t]
\centering
\small
\caption{\textbf{Qwen3-1.7B-Base and Qwen3-4B-Base training hyperparameters.} Results are aggregated across seven seeds for Qwen3-1.7B-Base and three seeds for Qwen3-4B-Base. The maximum-attempt setting applies to ReAPO; GRPO uses one attempt.}
\vspace{8pt}
\setlength{\tabcolsep}{5pt}
\renewcommand{\arraystretch}{1.08}
\begin{tabularx}{\textwidth}{@{}lYlY@{}}
\toprule
\textbf{Hyperparameter} & \textbf{Value} & \textbf{Hyperparameter} & \textbf{Value} \\
\midrule
Training dataset & Polaris-53K & Training hardware & $4\times$ NVIDIA B200 \\
Group size & $16$ & Maximum attempts & $2$ \\
Maximum prompt length & $1{,}024$ & Maximum response length & $4{,}096$ \\
Training batch size & $256$ & Actor learning rate & $1\times10^{-6}$ \\
Learning-rate warmup & $0$ steps & Gradient clipping & $0.3$ \\
KL-loss coefficient & $0$ & Entropy coefficient & $0$ \\
Training temperature & $1.0$ & Validation samples & $32$ per prompt \\
Validation temperature & $0.6$ & Validation top-$p$ & $0.95$ \\
Validation top-$k$ & None & Step & $1{,}040$ \\
\bottomrule
\end{tabularx}
\label{tab:qwen-large-hyperparameters}
\end{table*}

\subsection{Ablation Setup}
\label{app:ablation-setup}

\paragraph{Number of retry attempts.}
Our default ReAPO configuration permits at most $K=2$ attempts per trajectory. To study the effect of a larger retry budget, we additionally train a variant with $K=3$, allowing a second retry when the first two attempts are incorrect.

\paragraph{Retry prompt.}
The default retry instruction is ``Your answer is wrong. Reflect on your previous answer and try something else.'' We compare this against a shorter prompt, ``Try again,'' while keeping all other training settings fixed.

\paragraph{Attempt-level credit assignment.}
To isolate the effect of ReAPO's credit assignment, we consider a multi-turn baseline that uses the same verifier-conditioned retry process but treats the entire sequence of attempts as a single rollout. The model continues attempting the problem until it produces a correct answer or reaches the retry limit, after which the complete trajectory receives a single binary reward and one trajectory-level advantage is applied to all tokens in the sequence.

\clearpage
\section{Additional Results}
\label{app:additional-results}

\subsection{Per-Benchmark Training Curves}

Figures~\ref{fig:qwen17-training-curves} and \ref{fig:qwen4-training-curves} report first-attempt Pass@1 and Pass@32 throughout training. For Qwen3-1.7B-Base, ReAPO generally improves more steadily than GRPO and finishes ahead on most benchmarks for both metrics. The difference is particularly consistent on BeyondAIME, MATH-500, and Minerva. The Pass@32 results indicate that the improvement in first-attempt accuracy does not come from collapsing onto a narrower set of solutions.

\begin{figure}[t]
    \centering
    \includegraphics[width=\textwidth]{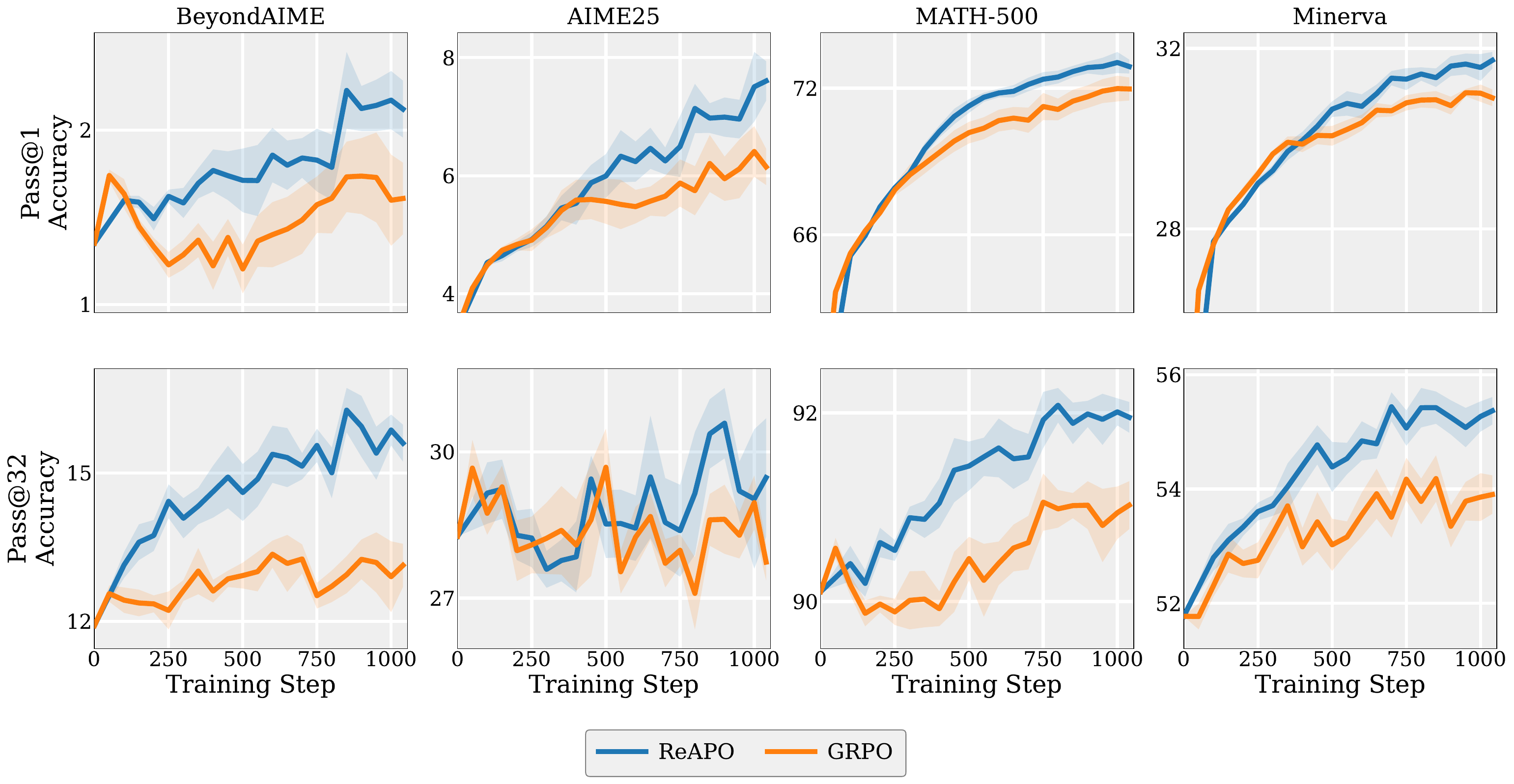}
    \caption{\textbf{Per-benchmark training curves for Qwen3-1.7B-Base.} The top row reports first-attempt Pass@1 and the bottom row reports Pass@32. Curves are averages across seven training seeds and shaded regions denote standard errors.}
    \label{fig:qwen17-training-curves}
\end{figure}

For Qwen3-4B-Base, ReAPO reaches stronger performance than GRPO on most benchmarks during the first 500--600 training steps. After this point, however, the performance of both methods declines on several tasks. Because the 4B experiments use the same optimization hyperparameters as the 1.7B experiments, this shared degradation suggests that these settings may be overly aggressive or otherwise insufficiently stable for the larger model. We therefore report results at step 500, rather than at the final training step. During training, we selected checkpoints using MATH-500 validation with 32 responses per prompt (Table \ref{tab:qwen-large-hyperparameters}), before running the larger per-step evaluations. Both ReAPO and GRPO peaked near step 500 under this validation procedure, so we report the step-500 checkpoint rather than the final checkpoint. However, model-scale-specific tuning of the learning rate and other optimization parameters may further improve both methods.

\begin{figure}[t]
    \centering
    \includegraphics[width=\textwidth]{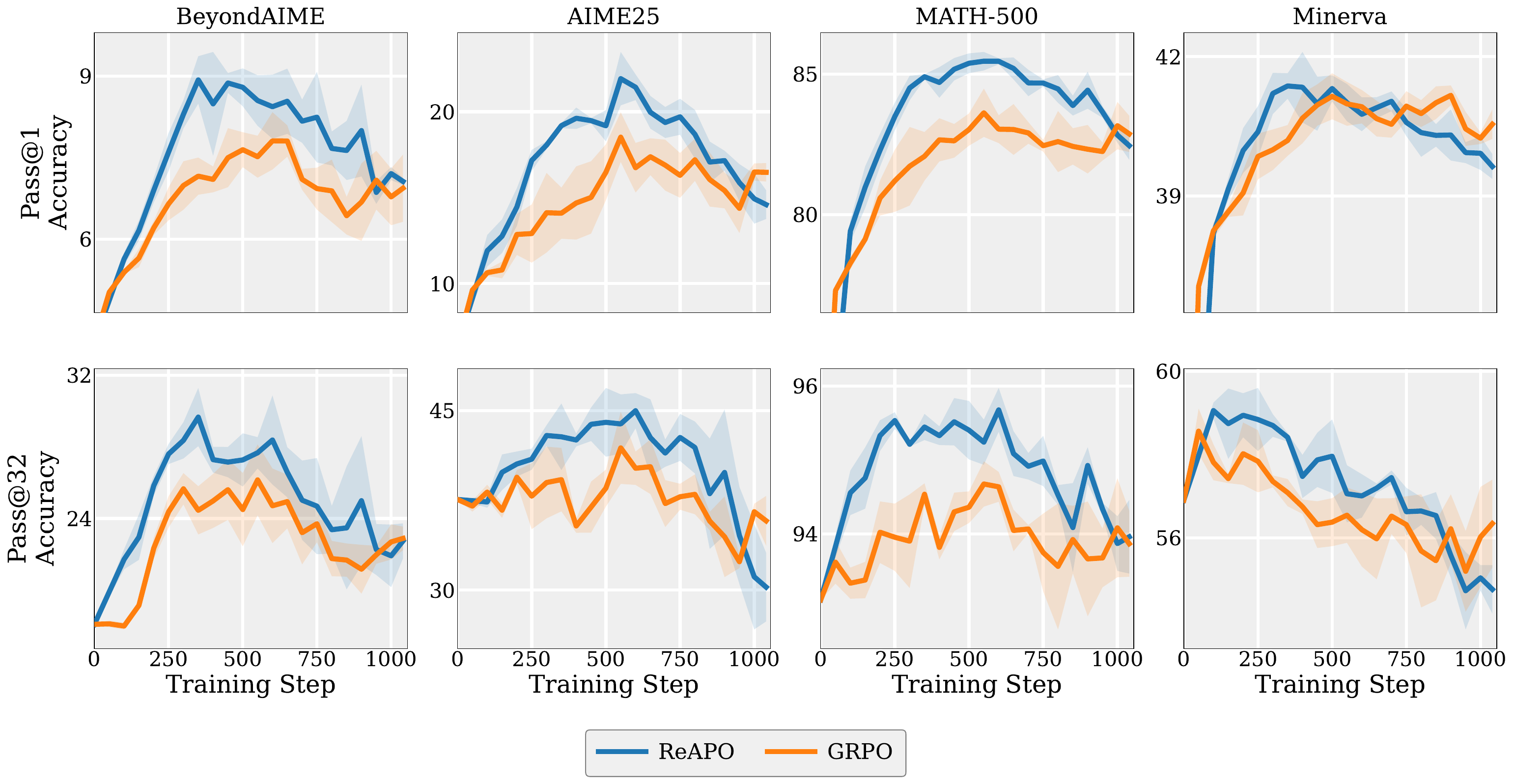}
    \caption{\textbf{Per-benchmark training curves for Qwen3-4B-Base.} The top row reports first-attempt Pass@1 and the bottom row reports Pass@32. Curves are averages across three training seeds and shaded regions denote standard errors.}
    \label{fig:qwen4-training-curves}
\end{figure}

\begin{figure}[t]
    \centering
    \includegraphics[width=\textwidth]{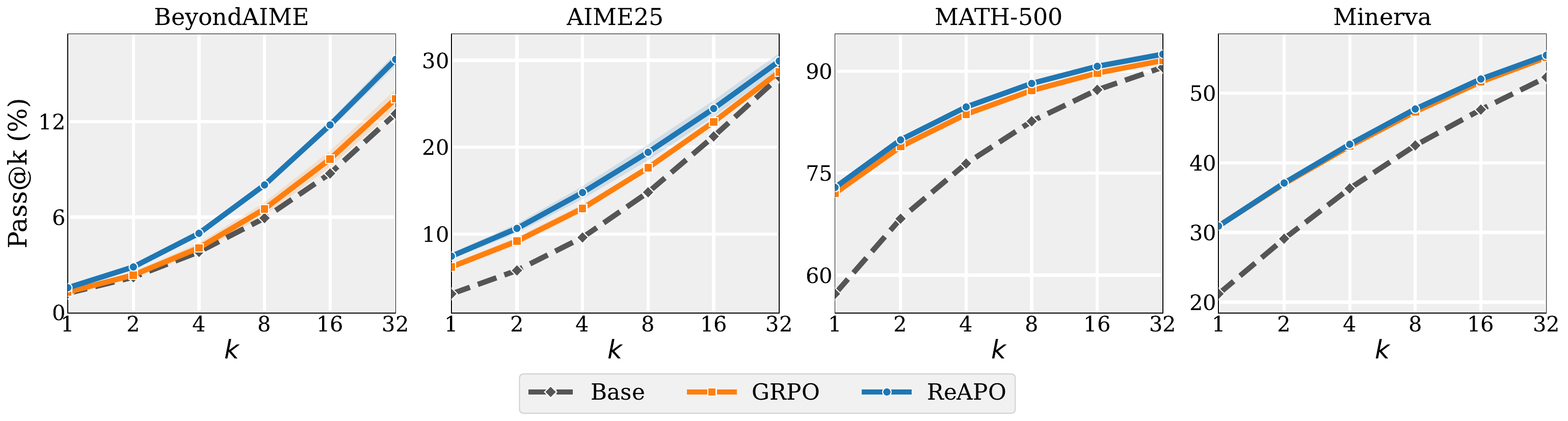}
    \caption{\textbf{First-attempt Pass@$k$ for Qwen3-1.7B-Base} at the last checkpoint. GRPO and ReAPO curves are means across seven training seeds, with shaded regions denoting standard errors.}
    \label{fig:qwen_1.7b_best32}
\end{figure}

\begin{figure}[t]
    \centering
    \includegraphics[width=\textwidth]{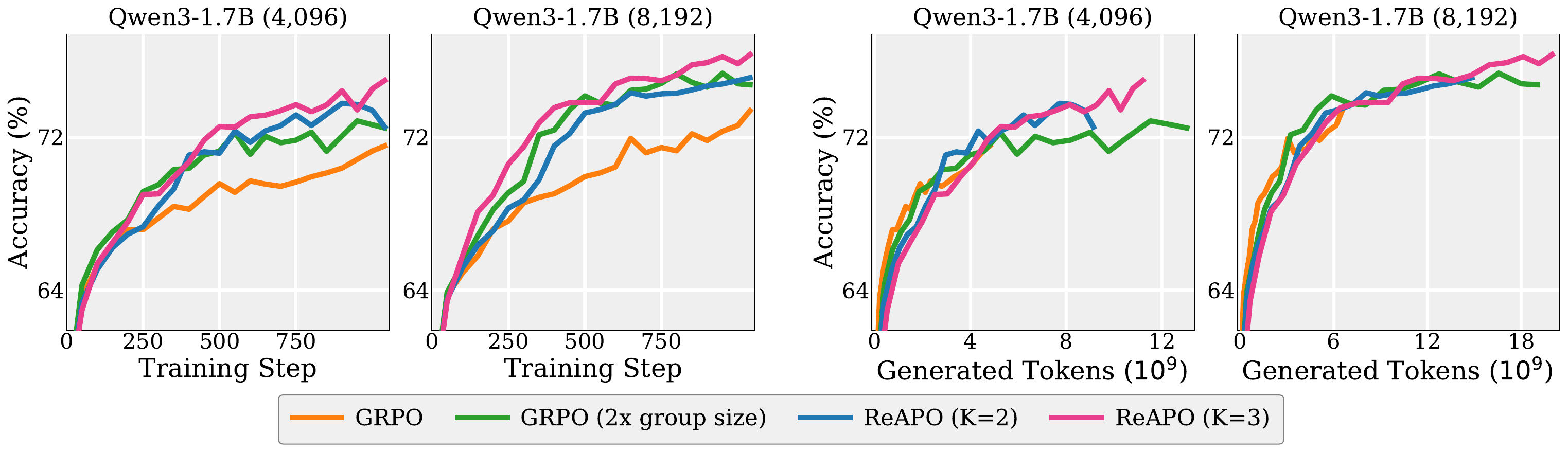}
    \caption{\textbf{Attempt budget, group size, and response length on Qwen3-1.7B-Base (MATH-500) using one training seed.} First-attempt accuracy against optimization steps (left two panels) and cumulative generated response tokens (right two panels) for maximum response lengths of 4,096 and 8,192 tokens per trajectory.}
    \label{fig:qwen17-ablation}
\end{figure}

\begin{table}[t]
\centering
\small
\caption{\textbf{Qwen3-1.7B-Base, seed 42, with different attempt budgets, group sizes, and maximum response lengths.} First-attempt accuracy (\%) from the high-sample evaluation at the selected checkpoint (as in Table~\ref{tab:large-model-results}). Single seed; the best value in each response-length group is bold.}
\vspace{8pt}
\setlength{\tabcolsep}{2.5pt}
\renewcommand{\arraystretch}{1.08}
\scalebox{0.92}{%
\begin{tabular}{@{}l*{7}{c}@{\hspace{8pt}}c@{}}
\toprule
\textbf{Model / Method} & \textbf{\shortstack{Beyond\\AIME}} & \textbf{\shortstack{AIME\\2024}} & \textbf{\shortstack{AIME\\2025}} & \textbf{\shortstack{HMMT\\2025}} & \textbf{\shortstack{MATH\\500}} & \textbf{\shortstack{Minerva\\~}} & \textbf{\shortstack{GPQA\\Diamond}} & \textbf{\shortstack{Avg.\\~}} \\
\midrule
Qwen3-1.7B & $1.2$ & $4.2$ & $3.1$ & $0.2$ & $57.2$ & $21.9$ & $10.8$ & $14.1$ \\
\midrule
\multicolumn{9}{@{}l}{\textit{Max response length 4,096}} \\
GRPO & $0.8$ & $10.4$ & $6.2$ & $1.0$ & $71.2$ & $30.5$ & $21.2$ & $20.2$ \\
GRPO ($2\times$ group size) & $1.8$ & $12.6$ & $7.7$ & $\mathbf{1.8}$ & $73.1$ & $\mathbf{32.5}$ & $25.0$ & $\mathbf{22.1}$ \\
\textbf{ReAPO} ($K{=}2$) & $1.4$ & $\mathbf{13.9}$ & $7.3$ & $1.1$ & $72.6$ & $32.0$ & $\mathbf{26.2}$ & $\mathbf{22.1}$ \\
\textbf{ReAPO} ($K{=}3$) & $\mathbf{1.9}$ & $13.5$ & $\mathbf{8.0}$ & $1.4$ & $\mathbf{74.7}$ & $32.0$ & $22.2$ & $22.0$ \\
\midrule
\multicolumn{9}{@{}l}{\textit{Max response length 8,192}} \\
GRPO & $1.3$ & $12.9$ & $7.6$ & $1.9$ & $72.9$ & $31.2$ & $25.8$ & $21.9$ \\
GRPO ($2\times$ group size) & $1.4$ & $12.9$ & $9.0$ & $2.8$ & $74.7$ & $32.4$ & $26.2$ & $22.8$ \\
\textbf{ReAPO} ($K{=}2$) & $\mathbf{1.9}$ & $13.9$ & $\mathbf{11.8}$ & $2.9$ & $75.0$ & $32.0$ & $\mathbf{26.5}$ & $23.4$ \\
\textbf{ReAPO} ($K{=}3$) & $\mathbf{1.9}$ & $\mathbf{15.7}$ & $9.4$ & $\mathbf{3.7}$ & $\mathbf{76.3}$ & $\mathbf{33.0}$ & $24.6$ & $\mathbf{23.5}$ \\
\bottomrule
\end{tabular}%
}
\label{tab:qwen17b-seed42-eval}
\end{table}

\begin{table}[t]
\centering
\small
\caption{\textbf{Qwen3-0.6B-Base results on GSM8K-Platinum at step 580.} All columns except SCoRe-Stage2 report first-response accuracy; SCoRe-Stage2 reports second-response accuracy without verifier feedback at inference. Values are mean $\pm$ standard error across eight seeds.}
\vspace{8pt}
\scalebox{0.92}{%
\setlength{\tabcolsep}{3pt}
\renewcommand{\arraystretch}{1.04}
\begin{tabular}{@{}l*{7}{c}@{}}
\toprule
\textbf{Metric} & \shortstack{Qwen3-0.6B\\Base} & GRPO & \shortstack{GRPO\\(2$\times$ Rollouts)} & MaxRL & \shortstack{SCoRe\\(Stage 2)} & \shortstack{\textbf{ReAPO}\\($K{=}2$)} & \shortstack{\textbf{ReAPO}\\($K{=}3$)} \\
\midrule
Mean@32 & $44.3 \pm 0.1$ & $79.3 \pm 0.2$ & $80.5 \pm 0.3$ & $77.8 \pm 0.1$ & $78.4 \pm 0.1$ & $\mathbf{81.5 \pm 0.2}$ & $\mathbf{81.2 \pm 0.3}$ \\
Pass@32 & $92.6 \pm 0.1$ & $94.1 \pm 0.2$ & $94.5 \pm 0.3$ & $95.3 \pm 0.1$ & $\mathbf{96.1 \pm 0.1}$ & $95.7 \pm 0.2$ & $95.3 \pm 0.2$ \\
\bottomrule
\end{tabular}
}

\label{tab:qwen06-results}
\end{table}

\begin{figure}[!t]
    \centering
    \includegraphics[width=0.4\textwidth]{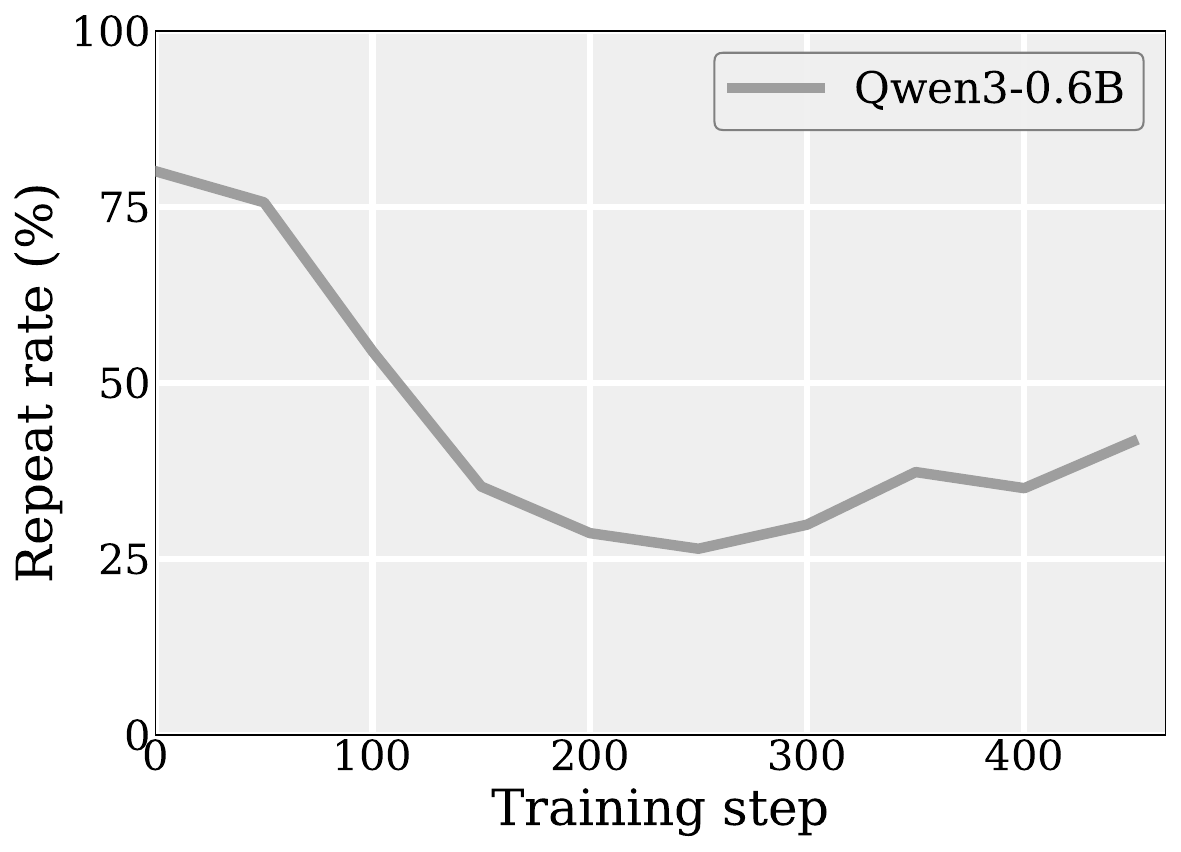}
    \caption{Percentage of second answers exactly matching the failed first answer on AIME 2025, for one seed. Exact repetition decreases less for the smaller model than for the larger models. Larger models seem to generate less repetitively than smaller ones.}
    \label{fig:qwen_0.6b_repetition}
\end{figure}

\subsection{Solution Coverage}

Table~\ref{tab:pass32-results} reports Pass@32 at the selected checkpoints. ReAPO numerically exceeds GRPO on all seven benchmarks at both model scales. For Qwen3-1.7B-Base, the improvement is statistically significant under the reported standard errors on six benchmarks, with the largest gain occurring on GPQA-Diamond. For Qwen3-4B-Base, ReAPO produces particularly large gains on AIME 2024, HMMT 2025, and GPQA-Diamond. However, reinforcement learning does not uniformly improve coverage over the base model: on GPQA-Diamond, the Qwen3-4B-Base base model retains higher Pass@32 than both trained policies.

\begin{table*}[t]
\centering
\small
\caption{\textbf{Pass@32 results at the selected checkpoints.} Values are percentages. GRPO and ReAPO are reported as mean $\pm$ standard error across training seeds, while base-model results use a single evaluation. A result is bold when its uncertainty interval lies strictly above those of the competing methods.}
\vspace{8pt}
\setlength{\tabcolsep}{2.5pt}
\renewcommand{\arraystretch}{1.08}
\scalebox{0.92}{%
\begin{tabular}{@{}l*{7}{c}@{}}
\toprule
\textbf{Model / Method} &
\textbf{\shortstack{Beyond\\AIME}} &
\textbf{\shortstack{AIME\\2024}} &
\textbf{\shortstack{AIME\\2025}} &
\textbf{\shortstack{HMMT\\2025}} &
\textbf{\shortstack{MATH\\500}} &
\textbf{\shortstack{Minerva\\~}} &
\textbf{\shortstack{GPQA\\Diamond}} \\
\midrule
Qwen3-1.7B-Base & $12.5$ & $26.9$ & $28.1$ & $5.5$ & $90.6$ & $52.9$ & $70.6$ \\
GRPO & $13.4 \pm 0.6$ & $30.8 \pm 0.3$ & $28.7 \pm 0.3$ & $12.2 \pm 1.0$ & $91.6 \pm 0.3$ & $55.1 \pm 0.3$ & $70.4 \pm 2.6$ \\
\textbf{ReAPO} (ours) & $\mathbf{15.9 \pm 0.4}$ & $31.1 \pm 0.6$ & $\mathbf{29.9 \pm 0.9}$ & $\mathbf{14.5 \pm 0.9}$ & $\mathbf{92.5 \pm 0.2}$ & $\mathbf{56.3 \pm 0.2}$ & $\mathbf{78.5 \pm 1.0}$ \\
\midrule
Qwen3-4B-Base & $18.2$ & $38.5$ & $37.9$ & $17.0$ & $94.2$ & $58.5$ & $\mathbf{73.5}$ \\
GRPO & $25.1 \pm 1.8$ & $42.8 \pm 0.9$ & $39.3 \pm 1.5$ & $20.5 \pm 0.9$ & $94.8 \pm 0.1$ & $57.1 \pm 0.5$ & $59.5 \pm 2.3$ \\
\textbf{ReAPO} (ours) & $28.1 \pm 1.3$ & $\mathbf{48.0 \pm 1.7}$ & $42.9 \pm 2.5$ & $\mathbf{24.5 \pm 0.9}$ & $\mathbf{95.6 \pm 0.4}$ & $58.5 \pm 0.9$ & $69.5 \pm 1.5$ \\
\bottomrule
\end{tabular}%
}
\label{tab:pass32-results}
\end{table*}

\subsection{Token-Normalized Performance}

Because ReAPO generates retries following incorrect responses, it generally produces more response tokens per optimization step than GRPO. Figure~\ref{fig:token-normalized-large-models} therefore compares the methods as a function of cumulative generated response tokens. Within the range in which both methods have observations, ReAPO remains competitive or stronger on most benchmarks. The clearest improvements occur on AIME 2025 and MATH-500 for the 4B model, while differences on Minerva are smaller. Thus, the improvements cannot be explained solely by ReAPO performing more generation within each optimization step.

\begin{figure}[t]
    \centering
    \includegraphics[width=\textwidth]{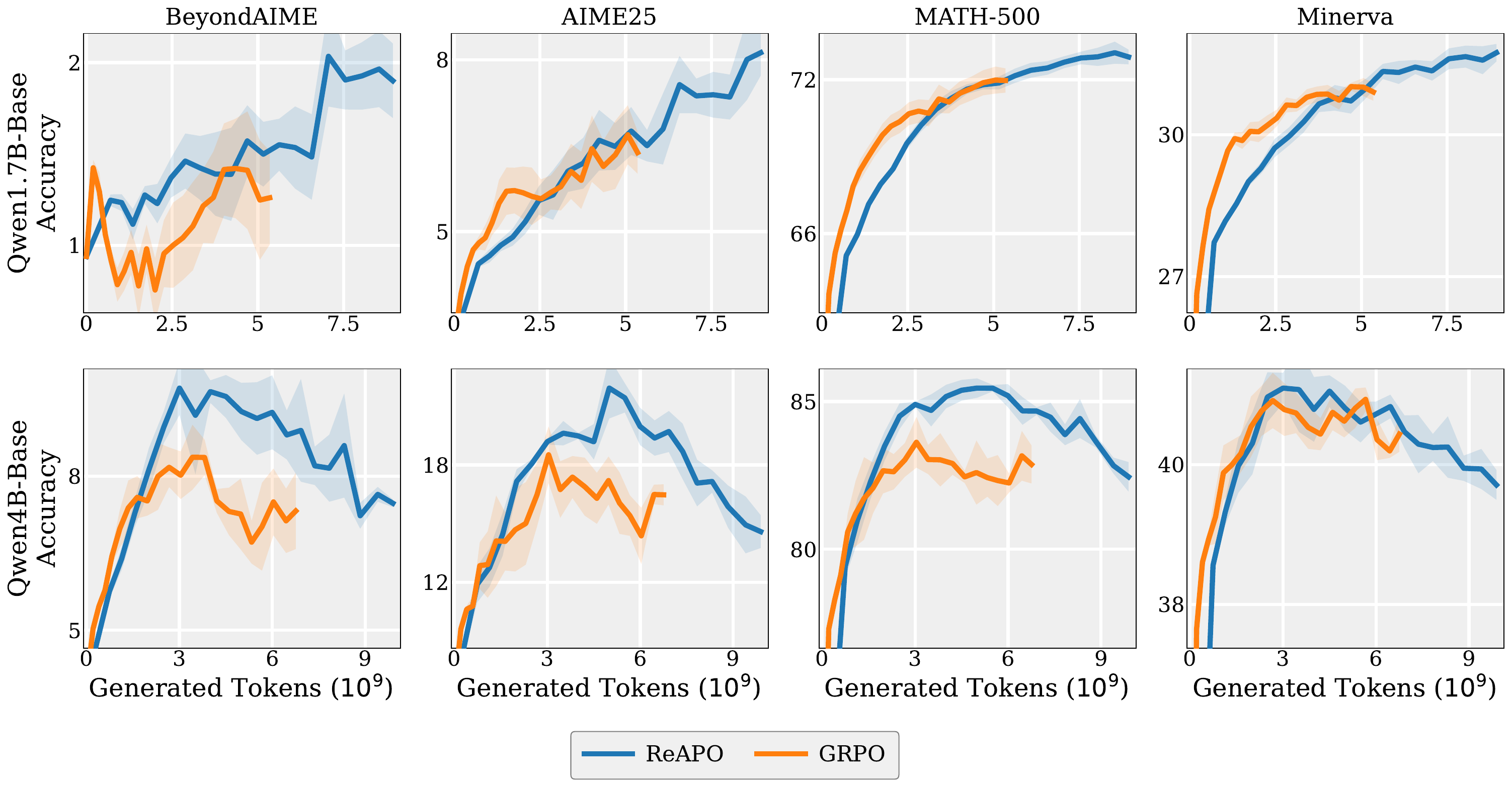}
    \caption{\textbf{Token-normalized first-attempt accuracy.} Performance is plotted against cumulative generated response tokens for Qwen3-1.7B-Base (top) and Qwen3-4B-Base (bottom). Curves show means across seven and three seeds, respectively, and shaded regions denote standard errors. Within the range of overlapping token counts, ReAPO is competitive with or outperforms GRPO on most benchmarks.}
    \label{fig:token-normalized-large-models}
\end{figure}

\subsection{Training Dynamics}

\begin{figure}[!t]
    \centering
    \includegraphics[width=\textwidth]{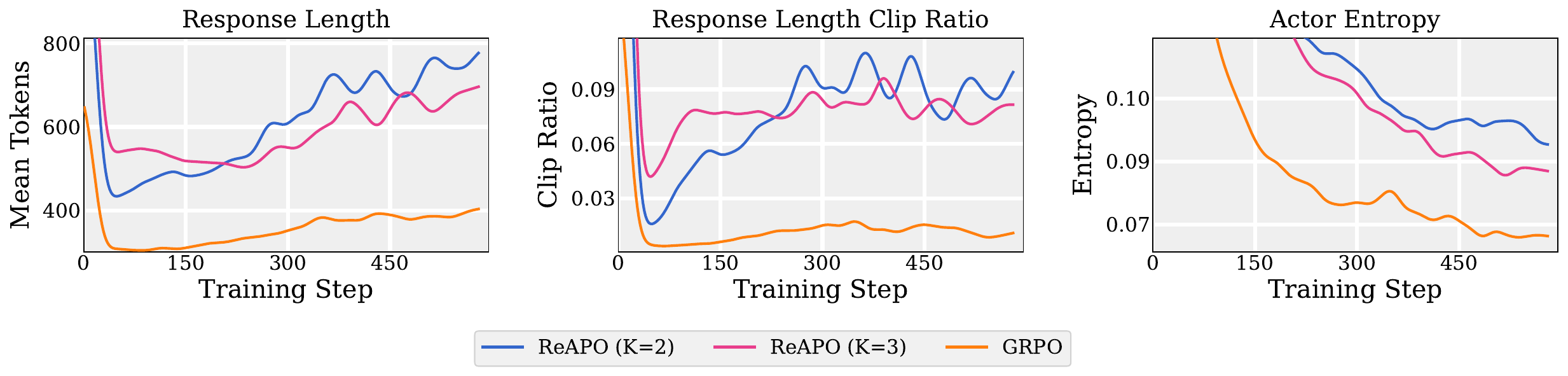}
    \vspace{3mm}

    \includegraphics[width=\textwidth]{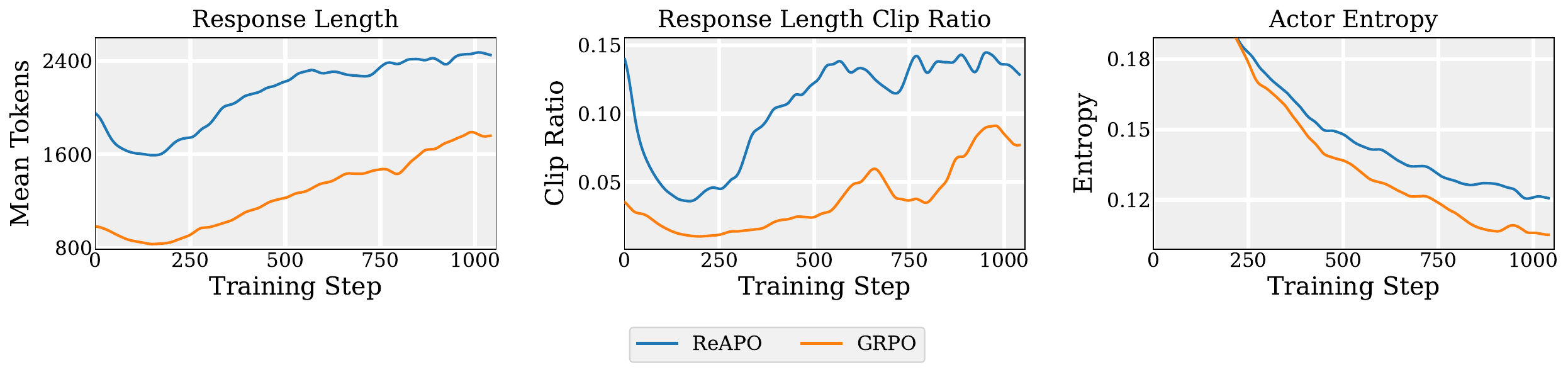}
    \vspace{3mm}

    \includegraphics[width=\textwidth]{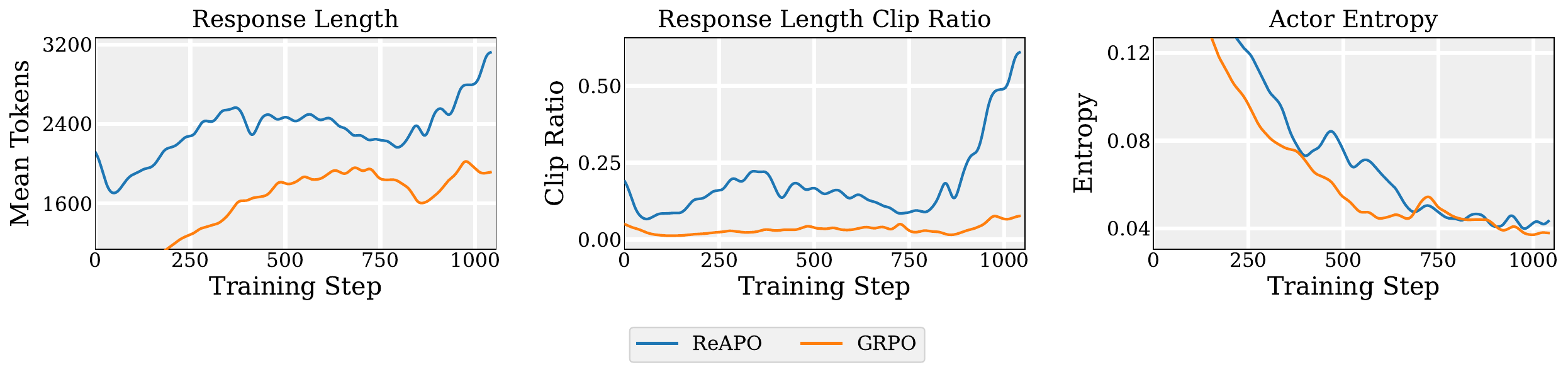}
    \caption{\textbf{Training dynamics for Qwen3-0.6B-Base, Qwen3-1.7B-Base, and Qwen3-4B-Base.} From top to bottom, the rows report results for Qwen3-0.6B-Base, Qwen3-1.7B-Base, and Qwen3-4B-Base, averaged across eight, seven, and three seeds, respectively.}
    \label{fig:training-dynamics}
\end{figure}

For Qwen3-1.7B-Base and Qwen3-0.6B-Base, ReAPO maintains higher actor entropy than GRPO throughout most of training (shown in Figure~\ref{fig:training-dynamics}), indicating a less concentrated token distribution. Higher entropy is not inherently better, but its persistence is consistent with ReAPO retaining more stochasticity while improving validation accuracy. For Qwen3-4B-Base, the sharp late increase in response length and clipping coincides with the decline in validation performance observed in Figure~\ref{fig:qwen4-training-curves}. This suggests that the later checkpoints experience increasing truncation pressure, although these diagnostics alone do not establish a causal relationship.

\begin{table*}[t]
\centering
\small
\caption{\textbf{Common phrases in first and second attempts.} Response-level phrase frequencies for Qwen3-1.7B-Base on MATH-500 at training step 1,040. Percentages are computed over nonempty first attempts and nonempty verifier-conditioned retries extracted from one seed's per-step evaluation outputs.}
\vspace{8pt}
\setlength{\tabcolsep}{6pt}
\renewcommand{\arraystretch}{1.08}
\begin{tabular}{@{}lr@{\hspace{1.2cm}}lr@{}}
\toprule
\textbf{First-attempt phrase} & \textbf{Responses} &
\textbf{Second-attempt phrase} & \textbf{Responses} \\
\midrule
``We need to find \ldots'' & 76.3\% & ``Let's approach this problem differently'' & 75.8\% \\
``Step by step'' & 58.4\% & ``We need to find \ldots'' & 74.9\% \\
``We are given \ldots'' & 43.8\% & ``Use the fact that \ldots'' & 37.9\% \\
``The final answer is \ldots'' & 34.2\% & ``Let's consider \ldots'' & 16.8\% \\
``Solve the problem'' & 34.1\% & ``We need to check \ldots'' & 15.5\% \\
``Understand the problem'' & 32.7\% & ``Use a simpler approach'' & 13.8\% \\
``Use the fact that \ldots'' & 16.6\% & ``Use a different method'' & 13.0\% \\
``We will follow these steps'' & 12.8\% & ``We need to re-evaluate \ldots'' & 9.9\% \\
& & ``Ensure we get the correct answer'' & 8.6\% \\
\bottomrule
\end{tabular}
\label{tab:attempt-phrases}
\end{table*}

\begin{table*}[t]
\centering
\small
\caption{\textbf{Common phrases in first and second attempts.} Response-level phrase frequencies for Qwen3-4B-Base on MATH-500 at training step 500. Percentages are computed over nonempty first attempts and nonempty verifier-conditioned retries extracted from one seed's per-step evaluation outputs.}
\vspace{8pt}
\setlength{\tabcolsep}{6pt}
\renewcommand{\arraystretch}{1.08}
\begin{tabular}{@{}lr@{\hspace{1.2cm}}lr@{}}
\toprule
\textbf{First-attempt phrase} & \textbf{Responses} &
\textbf{Second-attempt phrase} & \textbf{Responses} \\
\midrule
``Step by step'' & 84.0\% & ``We need to find \ldots'' & 63.6\% \\
``Final answer'' & 77.5\% & ``Let's rethink this'' & 37.1\% \\
``To find the \ldots'' & 68.0\% & ``Let's try a different approach'' & 33.5\% \\
``We need to find \ldots'' & 43.7\% & ``Let's think differently'' & 28.8\% \\
``To determine the \ldots'' & 15.8\% & ``I didn't arrive at the correct final answer'' & 25.7\% \\
``We can use \ldots'' & 10.5\% & ``Let's consider \ldots'' & 22.3\% \\
``Follow these steps'' & 10.1\% & ``Let's think about \ldots'' & 16.7\% \\
``Let's consider \ldots'' & 8.4\% & ``Use the fact that \ldots'' & 12.6\% \\
& & ``I didn't consider \ldots'' & 10.7\% \\
\bottomrule
\end{tabular}
\label{tab:qwen4b-attempt-phrases}
\end{table*}

\begin{figure*}[t]
\centering
\begin{minipage}{0.96\textwidth}
\begin{tcolorbox}[reapoouter]
\footnotesize

\begin{tcolorbox}[
  enhanced,
  boxrule=0.6pt,
  arc=1.5mm,
  colback=reapoBlueBg,
  colframe=reapoRule,
  left=2mm,
  right=2mm,
  top=0.6mm,
  bottom=0.6mm,
  before skip=0pt,
  after skip=0pt,
]
\scriptsize\textbf{Problem}\hspace{1.5mm} Evaluate $(1+2i)6-3i$. Please reason step by step, and put your final answer within \texttt{\detokenize{\boxed{}}}.
\end{tcolorbox}
\par\nointerlineskip\vspace{0.45mm}\noindent
\begin{minipage}[t]{0.065\linewidth}
\centering
\vspace{0pt}\vspace{1mm}
\reapoQwenIcon
\end{minipage}
\hfill
\begin{minipage}[t]{0.92\linewidth}
\vspace{0pt}
\begin{tcolorbox}[
  reapoattempt,
  colback=reapoRedBg,
  colframe=reapoRed!55,
]
\textcolor{reapoPurple}{\textbf{Attempt 1}}

\vspace{0.3mm}
\scriptsize
To evaluate the expression $(1+2i)(6-3i)$, let's proceed step by step.\par
\textbf{Step 1: Apply the Distributive Property (FOIL Method)}\par
$(1+2i)(6-3i)=6-3i+12i-6i^2.$\par
\ldots{} Recall that $i^2=-1$, so $6-3i+12i+6=12+9i.$\par
\textbf{Final answer:} $\boxed{12+9i}$
\hspace{1.5mm}\textcolor{reapoRed}{\textbf{Incorrect} $\boldsymbol{\times}$}
\end{tcolorbox}
\end{minipage}
\par\nointerlineskip\vspace{0.45mm}\noindent\hspace*{0.09\linewidth}%
\begin{minipage}{0.87\linewidth}
\begin{tcolorbox}[
  enhanced,
  boxrule=0.6pt,
  arc=1.5mm,
  colback=reapoGoldBg,
  colframe=reapoGold!55,
  left=2mm,
  right=2mm,
  top=0.6mm,
  bottom=0.6mm,
  before skip=0pt,
  after skip=0pt,
]
\scriptsize
\textcolor{reapoGold}{\textbf{Retry prompt}}\hspace{1.5mm}
\emph{``Your answer is wrong. Reflect on your previous answer and try something else.''}
\end{tcolorbox}
\end{minipage}
\par\nointerlineskip\vspace{0.45mm}\noindent
\begin{minipage}[t]{0.065\linewidth}
\centering
\vspace{0pt}\vspace{1mm}
\reapoQwenIcon
\end{minipage}
\hfill
\begin{minipage}[t]{0.92\linewidth}
\vspace{0pt}
\begin{tcolorbox}[
  reapoattempt,
  colback=reapoGreenBg,
  colframe=reapoGreen!55,
]
\textcolor{reapoPurple}{\textbf{Attempt 2}}

\vspace{0.3mm}
\scriptsize
You're right, I didn't read the problem correctly. The expression is $(1+2i)6-3i$, not $(1+2i)(6-3i)$. Let's try again.\par
\ldots{} Distribute the $6$ and combine the imaginary terms: $6(1+2i)-3i=6+12i-3i=6+9i.$\par
\textbf{Final answer:} $\boxed{6+9i}$
\hspace{1.5mm}\textcolor{reapoGreen}{\textbf{Correct} $\checkmark$}
\end{tcolorbox}
\end{minipage}

\end{tcolorbox}
\end{minipage}
\caption{A verifier-conditioned retry from ReAPO on MATH-500. The retry feedback causes the model to revisit its initial interpretation and recover the correct answer.}
\label{fig:retry-example}
\end{figure*}

\subsection{Ablations on Qwen3-1.7B-Base}
\label{app:qwen17-ablation}

\Cref{fig:qwen17-ablation} and \Cref{tab:qwen17b-seed42-eval} compare GRPO, GRPO with twice the group size, and ReAPO with $K\in\{2,3\}$ on Qwen3-1.7B-Base at maximum response lengths of 4,096 and 8,192 tokens, using a single training seed. Three observations stand out. First, ReAPO matches or exceeds GRPO with twice the group size while generating about $20\%$ to $30\%$ fewer tokens. Second, a longer response budget helps every method, and ReAPO with a 4,096-token budget performs on par with GRPO given 8,192 tokens ($22.1\%$ versus $21.9\%$). Third, increasing the attempt budget to $K=3$ slightly improves the mathematical benchmarks but reduces GPQA-Diamond accuracy (see \Cref{sec:experiments}).

\subsection{Sequential versus Parallel Sampling}
\label{app:sequential-vs-parallel}

To isolate the benefit of serial dependence during inference, we compare sequential retry sampling against parallel independent sampling on AIME 2025 using Qwen3-1.7B-Base, as shown in \Cref{fig:aime25-sequential-vs-parallel}. For a fixed total sampling budget $k$, the parallel baseline draws $k$ independent first attempts, whereas sequential sampling draws $k/2$ trajectories with up to two verifier-conditioned attempts per trajectory. At very small sampling budgets, parallel sampling performs slightly better, suggesting that independent draws can be preferable when only a handful of samples are available. As the budget increases, however, sequential sampling becomes more sample-efficient: in \Cref{fig:aime25-sequential-vs-parallel}, it reaches the success rate attained by parallel Pass@2048 using approximately $1.7\times$ fewer generated responses. This indicates that conditioning later attempts on previous failures improves solution coverage in the higher-budget regime, even though parallel sampling remains competitive at very low budgets.

\begin{figure}[t]
    \centering
    \includegraphics[width=0.5\textwidth]{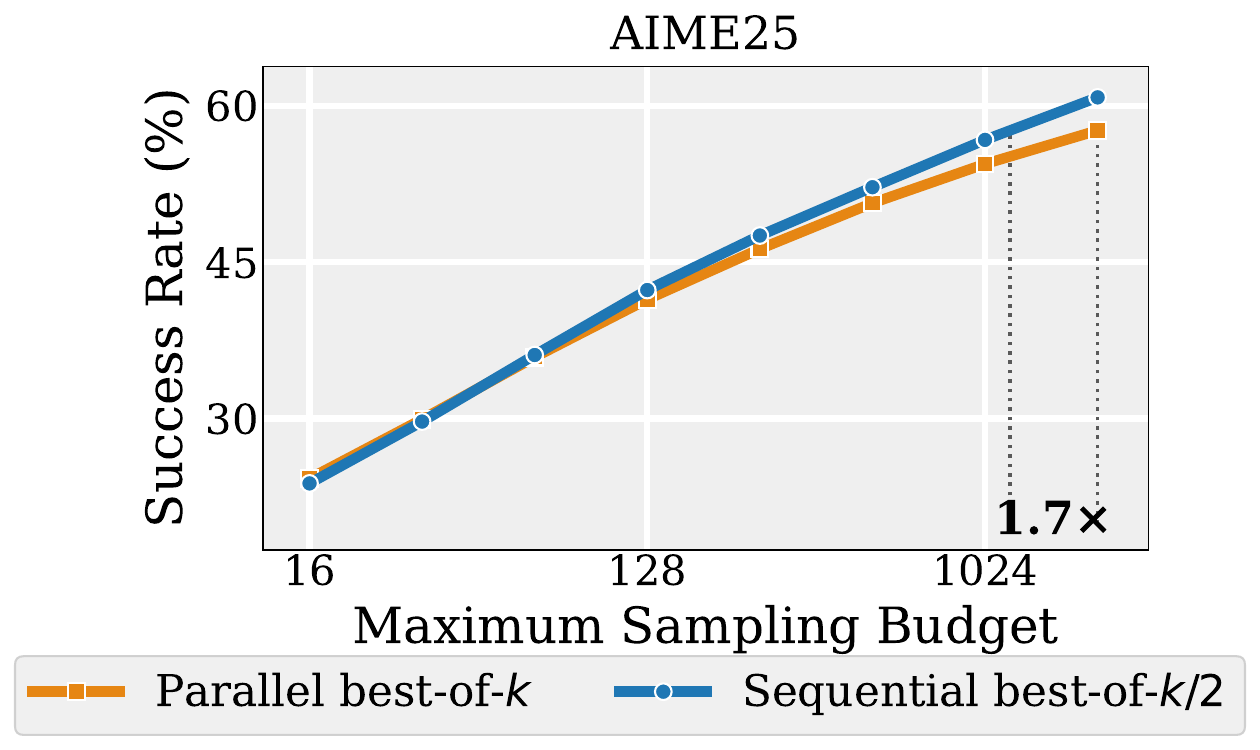}
    \caption{\textbf{Sequential versus parallel sampling on AIME 2025 with Qwen3-1.7B-Base.}
    At total sampling budget $k$, parallel sampling draws $k$ independent first attempts, while sequential sampling draws $k/2$ trajectories with up to two verifier-conditioned attempts each. The dotted lines indicate the sampling budgets required to reach the same success rate, and the annotation reports their ratio.}
    \label{fig:aime25-sequential-vs-parallel}
\end{figure}

\clearpage
\section{Formal Derivation of ReAPO}\label{app:theory}

In this section we provide the formal derivation of the credit assignment mechanism used in ReAPO as well as the proofs of \Cref{prop:policy_gradient_decomposition,prop:time_invariant_policy_gradient}.  Finally, we bound the bias introduced by the truncation of the number of attempts allowed that we instantiate in practice and discuss the theoretical implications of normalizing our advantages by their standard deviation.

To begin we recall the notation of \Cref{sec:method}.  We conduct our analysis entirely with respect to a single fixed prompt $x$ and thus omit the dependence on $x$ in the subsequent derivations and notation.  Recall that the model $\pi_\theta$ generates a sequence of attempts $y_1, y_2, \dots$ associated with a sequence of binary rewards $r_1, r_2, \dots$ until the stopping condition that $r_m = 1$ for some $m$, at which point the interaction terminates.  We define
\begin{align}\label{eq:stopping_time}
    \tau = \inf\left\{ m \geq 1 | r_m = 1 \right\}
\end{align}
to be the stopping time at which the first successful attempt occurs.  We assume that the interaction continues until a successful attempt is made, i.e., $\tau < \infty$ almost surely and moreover that $\ee\left[ \tau \right] < \infty$.  Our primary objective, as motivated by the stochastic shortest paths literature \cite{6dfe7762-0280-326e-88a5-877d93306e7b}, is to minimize the expected number of failed attempts before the first success, i.e.,
\begin{align}
    \cL(\pi_\theta) = \ee_{\pi_\theta}\left[ \tau - 1 \right].
\end{align}
In order to conduct our analysis, we let
\begin{align}\label{eq:q_m}
    q_m(\pi_\theta) = \pp_{\pi_\theta}(r_m = 1 | r_1 = \cdots = r_{m-1} = 0),
\end{align}
i.e. the probability that the $m$-th attempt is successful given that all previous attempts have failed.  We make the simplifying assumption that $q_m(\pi_\theta)$ depends on the history only through the fact that all previous attempts have failed and does not depend on the specific outcomes of those previous attempts.

We also introduce
\begin{align}\label{eq:alpha_m}
    \alpha_m(\pi_\theta) = \pp_{\pi_\theta}(\tau \geq m) = \prod_{j = 1}^{m-1} \left( 1 - q_j(\pi_\theta) \right),
\end{align}
the probability that the interaction has not terminated before the $m$-th attempt.  Thus, while $q_m(\pi_\theta)$ may not be monotone in $m$, $\alpha_m$ is always decreasing.  Finally, we define
\begin{align}\label{eq:c_m}
    c_m(\pi_\theta) = \sum_{k = m}^\infty \prod_{j = m+1}^k \left( 1 - q_j(\pi_\theta) \right),
\end{align}
the expected number of additional attempts needed starting from attempt $m+1$, given that the first $m$ attempts have failed.  Here the empty product in the $k=m$ term equals $1$, accounting for attempt $m+1$.

\subsection{Derivation of the Policy Gradient Decomposition}

We now restate and prove \Cref{prop:policy_gradient_decomposition}.
\begin{proposition}\label{prop:policy_gradient_decomposition_formal}
    Suppose that for all $\theta$ in a sufficiently small neighborhood of $\theta_0$, it holds that $\ee_{\pi_\theta}[\tau] < \infty$.  Suppose further that for all $m \geq 1$, it holds that
    \begin{align}
        \pp_{\pi_\theta}(r_m = 1 | x, y_1, \dots, y_{m-1}) = q_m(\pi_\theta),
    \end{align}
    i.e. the probability that the $m$-th attempt is successful depends only on the fact that all previous attempts have failed and not on the specific outcomes of those attempts.  Let $\alpha_m$ and $c_m$ be defined as in \eqref{eq:alpha_m} and \eqref{eq:c_m}, respectively.  Moreover suppose that
    \begin{align}\label{eq:regularity_conditions}
        q_m(\theta) \geq q_{\min} > 0 \quad \text{and} \quad \norm{\nabla_\theta q_m(\theta)} \leq B.
    \end{align}
    Then it holds that
    \begin{align}
        -\nabla_\theta \cL(\pi_\theta) = \sum_{m = 1}^\infty \alpha_m(\pi_\theta) \nabla_\theta q_m(\pi_\theta) c_m(\pi_\theta).
    \end{align}
\end{proposition}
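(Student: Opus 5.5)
Write $\cL(\pi_\theta)=\ee[\tau-1]$ via the tail-sum formula for a nonnegative integer variable: $\ee[\tau]=\sum_{m\ge1}\pp(\tau\ge m)=\sum_{m\ge1}\alpha_m(\pi_\theta)$. Hence
\begin{align}
\cL(\pi_\theta)=\sum_{m\ge 2}\alpha_m(\pi_\theta)=\sum_{m\ge2}\prod_{j=1}^{m-1}\bigl(1-q_j(\theta)\bigr).
\end{align}
The first task is to confirm the product formula \eqref{eq:alpha_m}. It follows from the chain rule of probability together with the stated assumption that the conditional success probability of attempt $m$ equals $q_m$ on the event that all earlier attempts failed. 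This makes $\cL$ an explicit function of the sequence $(q_j(\theta))_j$, with no remaining dependence on the full generation process.

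Next, differentiate term by term. For fixed $n\ge 2$, the product rule gives
\begin{align}
\nabla_\theta\alpha_n=-\sum_{m=1}^{n-1}\nabla q_m\prod_{j<m}(1-q_j)\prod_{j=m+1}^{n-1}(1-q_j)=-\sum_{m=1}^{n-1}\alpha_m\nabla q_m\prod_{j=m+1}^{n-1}(1-q_j).
\end{align}
Summing over $n\ge2$ and exchanging the order of summation, $m$ ranges over $m\ge1$ and, for each $m$, $n$ ranges over $n\ge m+1$. Setting $k=n-1\ge m$, the inner sum becomes $\sum_{k\ge m}\prod_{j=m+1}^{k}(1-q_j)=c_m$. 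This yields $-\nabla\cL=\sum_m\alpha_m c_m\nabla q_m$.

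The main obstacle is justifying two interchanges: the gradient with the infinite sum over $n$, and the order of the double sum. This is where \eqref{eq:regularity_conditions} is used. Since $q_j\ge q_{\min}$, we have $\alpha_m\le(1-q_{\min})^{m-1}$ and $\prod_{j=m+1}^k(1-q_j)\le(1-q_{\min})^{k-m}$, so $c_m\le 1/q_{\min}$. The norm of the $n$-th derivative term is then at most $(n-1)B(1-q_{\min})^{n-2}$, uniformly in $\theta$ on the neighborhood.

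These bounds give what is needed:
\begin{itemize}
\item The bound $(n-1)B(1-q_{\min})^{n-2}$ is summable, so the series of derivatives converges uniformly on the neighborhood of $\theta_0$. The standard theorem on differentiating uniformly convergent series (the Weierstrass M-test for derivatives), applied to partial sums that converge pointwise because $\ee[\tau]<\infty$, justifies $\nabla\sum_n\alpha_n=\sum_n\nabla\alpha_n$.
\item The double series is absolutely convergent, with $\sum_{m}\sum_{k\ge m}B(1-q_{\min})^{k-1}<\infty$, so Fubini/Tonelli for series justifies the reindexing.
\end{itemize}
For the bookkeeping I will index carefully and check small cases, $n=2,3$, to confirm the empty-product convention for $c_m$ at $k=m$.
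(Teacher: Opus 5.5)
Your proposal is correct and follows the paper's proof step for step. Both arguments write $\cL=\sum_{k\ge1}\alpha_{k+1}$ via the tail-sum formula, differentiate each product $\alpha_{k+1}=\prod_{j\le k}(1-q_j)$ with the product rule, and exchange the order of summation to produce $c_m$. The only difference is rigor at one point. You justify both interchanges: the summable bound $(n-1)B(1-q_{\min})^{n-2}$ handles the derivative and the sum, and absolute convergence of the double series handles the reordering. This reads the regularity conditions as holding uniformly on the neighborhood of $\theta_0$, which the paper implicitly needs as well. The paper simply asserts that these conditions allow differentiating under the sum.
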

\begin{proof}
    We first observe that by the tail sum formula for expectations,
    \begin{align}
        \cL(\pi_\theta) = \sum_{m = 2}^\infty \pp_{\pi_\theta}(\tau \geq m) = \sum_{m = 2}^\infty \alpha_m(\pi_\theta) = \sum_{k = 1}^\infty \alpha_{k+1}(\pi_\theta).
    \end{align}
    Now, for all $k \geq 1$, it is immediate by definition that
    \begin{align}
        \alpha_{k+1}(\pi_\theta) = \prod_{j = 1}^k (1 - q_j(\pi_\theta)).
    \end{align}
    By \eqref{eq:regularity_conditions}, we can differentiate under the infinite sum.  Differentiating, we have
    \begin{align}
        - \nabla_\theta \alpha_{k+1}(\pi_\theta) = \sum_{m= 1}^k \alpha_m(\pi_\theta) \nabla_\theta q_m(\pi_\theta) \prod_{j = m+1}^k (1 - q_j(\pi_\theta)).
    \end{align}
    Summing over $k$ from $1$ to $\infty$, we obtain
    \begin{align}
         - \nabla_\theta \cL(\pi_\theta) &=  \sum_{k = 1}^\infty - \nabla_\theta \alpha_{k+1}(\pi_\theta) \\
         &= \sum_{k = 1}^\infty \sum_{m= 1}^{k} \alpha_m(\pi_\theta) \nabla_\theta q_m(\pi_\theta) \prod_{j = m+1}^k (1 - q_j(\pi_\theta)) \\
         &= \sum_{m = 1}^\infty \alpha_m(\pi_\theta) \nabla_\theta q_m(\pi_\theta) \sum_{k = m}^\infty \prod_{j = m+1}^k (1 - q_j(\pi_\theta)) \\
         &= \sum_{m = 1}^\infty \alpha_m(\pi_\theta) \nabla_\theta q_m(\pi_\theta) c_m(\pi_\theta).
    \end{align}
    The result holds.
\end{proof}
The inclusion of $c_m(\pi_\theta)$ is inconvenient due to its dependence on a quantity that is hard to estimate.  We thus make a stronger assumption, namely that $c_m(\pi_\theta)$ is \emph{independent of } $m$ and denote this common value by $c(\pi_\theta)$.  This assumption significantly simplifies the policy gradient expression.  We now restate and prove \Cref{prop:time_invariant_policy_gradient}.
\begin{proposition}\label{prop:time_invariant_policy_gradient_formal}
    Under the assumptions of \Cref{prop:policy_gradient_decomposition_formal}, suppose that $c_m(\pi_\theta)$ is independent of $m$ with common value $c(\pi_\theta)$. Define
    \begin{align}
        \nabla_\theta \lreapo(\pi_\theta):
        = \ee_{\pi_\theta}\left[ \frac 1G \sum_{g=1}^G \sum_{k=1}^{\tau_g}
        r_{g,k} \cdot \nabla_\theta \log \pi_\theta(y_{g,k} \mid h_{g,k-1}) \right].
    \end{align}
    Then it holds that
    \begin{align}
        -\nabla_\theta \cL(\pi_\theta)
        = c(\pi_\theta) \sum_{m=1}^\infty \alpha_m(\pi_\theta) \nabla_\theta q_m(\pi_\theta)
        = c(\pi_\theta) \cdot \nabla_\theta \lreapo(\pi_\theta).
    \end{align}
\end{proposition}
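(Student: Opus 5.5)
The first equality is immediate from \Cref{prop:policy_gradient_decomposition_formal}. Setting $c_m(\pi_\theta)=c(\pi_\theta)$ for every $m$ lets us pull the constant out of the series, so the real work is the second equality:
\begin{align}
\nabla_\theta \lreapo(\pi_\theta)=\sum_{m\ge1}\alpha_m(\pi_\theta)\nabla_\theta q_m(\pi_\theta).
\end{align}

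First I reduce to a single rollout. The $G$ sequences are i.i.d., so the expectation of the average over $g$ equals the expectation for one sequence:
\begin{align}
\ee\Big[\sum_{k=1}^{\tau} r_k\nabla_\theta\log\pi_\theta(y_k\mid h_{k-1})\Big].
\end{align}
Next I write the random upper limit as indicators, $\sum_{k\ge1}\mathbf 1\{\tau\ge k\}\,r_k\nabla_\theta\log\pi_\theta(y_k\mid h_{k-1})$. The interchange of expectation and sum is justified by dominated convergence. To get the bound I assume a uniform bound on the score norm, or equivalently use $\ee[\tau]<\infty$ together with bounded per-attempt score contributions. This step is the regularity gap, and I would simply adopt such a bound, in the same spirit as \eqref{eq:regularity_conditions}.

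Then I evaluate the $k$-th term by conditioning on the history $h_{k-1}$. The event $\{\tau\ge k\}$ is $h_{k-1}$-measurable and equals $\{r_1=\dots=r_{k-1}=0\}$. Given $h_{k-1}$, the attempt $y_k$ is drawn from $\pi_\theta(\cdot\mid h_{k-1})$, and the log-derivative trick gives
\begin{align}
\ee\big[r_k\nabla_\theta\log\pi_\theta(y_k\mid h_{k-1})\mid h_{k-1}\big]=\nabla_\theta \pp_{\pi_\theta}(r_k=1\mid h_{k-1}).
\end{align}
Here the gradient acts only on the $\theta$-dependence of the $k$-th conditional, holding the history fixed. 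By the sufficient-statistic assumption of \Cref{prop:policy_gradient_decomposition_formal}, on $\{\tau\ge k\}$ this conditional probability is the function $q_k(\theta)$, so its gradient is $\nabla_\theta q_k$. Taking the outer expectation gives $\pp(\tau\ge k)\nabla_\theta q_k=\alpha_k\nabla_\theta q_k$. Summing over $k$ yields the claim.

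The main obstacle is this last identification. $\nabla_\theta$ of the conditional success probability given a particular history must coincide with $\nabla_\theta q_k(\theta)$, where $q_k$ is defined by conditioning on the event of prior failures, which itself depends on $\theta$. The assumption states equality of the values for every $\theta$ and every failing history. Because $\pp_{\pi_\theta}(r_k=1\mid h_{k-1})=q_k(\theta)$ as functions of $\theta$ for each fixed failing history, the gradients agree as well, and I would make this pointwise-in-$\theta$ reading explicit.
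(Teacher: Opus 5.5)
Your proof is correct and follows the paper's argument. The first equality comes from pulling the constant $c(\pi_\theta)$ out of the series in \Cref{prop:policy_gradient_decomposition_formal}. The second comes from conditioning on a reached failing history, applying the log-derivative trick with that history held fixed, reading the history-independence assumption pointwise in $\theta$ (the paper phrases this as the assumption holding in a neighborhood of the parameter), and weighting each attempt by $\alpha_k$.

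Your explicit dominated-convergence step for swapping $\ee$ and $\sum_k$ is a point the paper leaves implicit. Since $r_k=0$ for $k<\tau$, each realized partial sum has at most one nonzero term, so a uniform score bound together with $\tau<\infty$ a.s. already suffices there.
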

\begin{proof}
    The first equality follows immediately from \Cref{prop:policy_gradient_decomposition_formal} and the assumption that $c_m(\pi_\theta)$ is independent of $m$. For the second equality, fix a history $h=h_{g,k-1}$ on which attempt $k$ is reached. The history-independence assumption holds in a neighborhood of the parameter being differentiated, so $\sum_y \pi_\theta(y \mid h)r(x,y) = q_k(\pi_\theta)$. Holding $h$ fixed and differentiating gives
    \begin{align}
        \ee_{\pi_\theta}\left[ r_{g,k} \nabla_\theta \log \pi_\theta(y_{g,k} \mid h) \mid h_{g,k-1}=h \right]
        &= \sum_y r(x,y) \nabla_\theta \pi_\theta(y \mid h) = \nabla_\theta q_k(\pi_\theta).
    \end{align}
    Finally, attempt $k$ is reached with probability $\alpha_k(\pi_\theta)$. Averaging over reached histories and summing over attempts gives
    \begin{align}
        \nabla_\theta \lreapo(\pi_\theta)
        = \sum_{k=1}^\infty \alpha_k(\pi_\theta) \nabla_\theta q_k(\pi_\theta),
    \end{align}
    which proves the second equality.
\end{proof}

To illustrate the conclusion of \Cref{prop:time_invariant_policy_gradient_formal}, consider the simple setting where the first attempt succeeds with probability $p_\theta = q_1(\pi_\theta)$ and for $m > 1$, the success probability is $q_m(\pi_\theta) = q_\theta$. In words, after the first attempt, each attempt succeeds with a constant probability. Then it is immediate that $c_m(\pi_\theta) = q_\theta^{-1}$ for all $m \geq 1$, and hence
\begin{align}
    -\nabla_\theta \cL(\pi_\theta) = \frac{\nabla_\theta \lreapo(\pi_\theta)}{q_\theta}.
\end{align}

\subsection{The Effect of Truncation}

While the above theory was derived assuming an infinite number of attempts, in practice, we often truncate the number of attempts to a finite maximum $K$.  This truncation introduces a bias in the policy gradient estimate, as it ignores the contributions from attempts beyond the $K$-th one.  To better understand the effect of the bias, consider the truncated objective
\begin{align}
    \cL_K(\pi_\theta) = \ee_{\pi_\theta}\left[ (\tau - 1) \wedge K \right],
\end{align}
i.e. the expected number of failed attempts, truncated at $K$.  We first observe that as $K \uparrow \infty$, the truncated objective $\cL_K(\pi_\theta)$ converges to the original objective $\cL(\pi_\theta)$ at an exponential rate under a mild condition on $q_m(\pi_\theta)$ for $m > 1$.
\begin{proposition}
    For any choice of $K$ and any $\pi_\theta$, it holds that
    \begin{align}
        \cL(\pi_\theta) - \cL_K(\pi_\theta) = \ee_{\pi_\theta}\left[ \left( \tau - 1 - K \right)_+ \right] = \sum_{m = K + 1}^\infty \alpha_{m+1}(\pi_\theta).
    \end{align}
    Moreover, if for all $m \geq 1$ it holds that $q_m(\pi_\theta) \geq q_{\min} > 0$ uniformly in $m$, then
    \begin{align}
        \cL(\pi_\theta) - \cL_K(\pi_\theta) \leq \frac{(1 - q_{\min})^{K+1}}{q_{\min}}.
    \end{align}
\end{proposition}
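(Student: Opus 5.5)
The plan is to reduce both claims to the tail-sum formula already used in the proof of \Cref{prop:policy_gradient_decomposition_formal}, working with the nonnegative integer random variable $Z = \tau - 1$.

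\textbf{The identity.} For any real $z \geq 0$ we have the pointwise identity $z - (z \wedge K) = (z - K)_+$. Applying it with $z = Z$ gives $(\tau - 1) - ((\tau-1)\wedge K) = (\tau - 1 - K)_+$. Since $\ee_{\pi_\theta}[\tau] < \infty$ by the standing assumption, both $\cL(\pi_\theta)$ and $\cL_K(\pi_\theta)$ are finite. Linearity of expectation then yields the first equality.

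For the second equality, I would write $(Z-K)_+ = \sum_{j \geq K+1} \mathbf{1}\{Z \geq j\}$, which holds because $Z$ is integer valued. Taking expectations and exchanging sum and expectation is justified by Tonelli, since all terms are nonnegative. This gives
\begin{align}
\ee_{\pi_\theta}\left[(\tau-1-K)_+\right] = \sum_{m=K+1}^\infty \pp_{\pi_\theta}(\tau \geq m+1) = \sum_{m=K+1}^\infty \alpha_{m+1}(\pi_\theta),
\end{align}
using the definition $\alpha_{m+1}(\pi_\theta) = \pp_{\pi_\theta}(\tau \geq m+1)$ from \eqref{eq:alpha_m}.

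\textbf{The bound.} I would use the product form $\alpha_{m+1}(\pi_\theta) = \prod_{j=1}^m (1 - q_j(\pi_\theta))$ from \eqref{eq:alpha_m}. Together with $q_j \geq q_{\min}$, this gives $\alpha_{m+1}(\pi_\theta) \leq (1-q_{\min})^m$. Summing the geometric series over $m \geq K+1$ gives
\begin{align}
\sum_{m=K+1}^\infty (1-q_{\min})^m = \frac{(1-q_{\min})^{K+1}}{q_{\min}},
\end{align}
which is the stated bound.

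\textbf{Main obstacle.} None of these steps is substantive. The only point needing care is the product representation of $\alpha_{m+1}$, which relies on the history-independence assumption behind \eqref{eq:q_m}. Without that assumption, $\alpha_{m+1}(\pi_\theta) = \pp_{\pi_\theta}(\tau \geq m+1)$ still holds and the first identity goes through unchanged. The geometric bound would then instead follow from the chain rule, $\pp(\tau \geq m+1) = \prod_{j=1}^m \pp(r_j = 0 \mid r_1 = \cdots = r_{j-1} = 0)$, provided the lower bound $q_{\min}$ is read as a bound on these conditional success probabilities.
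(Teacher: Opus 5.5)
Your proposal is correct and follows the paper's route: the tail-sum formula gives the identity, and the product form $\alpha_{m+1}=\prod_{j=1}^m(1-q_j)$ together with a geometric series gives the bound; you simply supply the details (the pointwise identity and the Tonelli exchange) that the paper calls immediate. Your closing caveat is unnecessary: since \eqref{eq:q_m} defines $q_j$ as the success probability conditional on the event that all earlier attempts failed, the product form is exactly the chain rule and needs no history-independence assumption.
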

\begin{proof}
    The first statement is immediate from the tail-sum formula.  For the second statement, observe that if $q_m(\pi_\theta) \geq q_{\min}$ then
    \begin{align}
        \alpha_{m+1} = \prod_{k = 1}^m (1 - q_k(\pi_\theta)) \leq (1 - q_{\min})^m.
    \end{align}
    Summing over $m = K + 1, K + 2, \dots$ gives the result.
\end{proof}
Note that a consequence of the previous result suggests that as long as $K \gtrsim -\frac{\log(q_{\min} \epsilon)}{q_{\min}}$, it holds that $\cL_{K}(\pi_\theta) \approx \cL(\pi_\theta)$ up to an error of at most $\epsilon$.
Differentiating the tail-sum representation above, we obtain
\begin{align}
    \nabla_\theta \cL(\pi_\theta) - \nabla_\theta \cL_K(\pi_\theta)
    &= \sum_{m=K+1}^\infty \nabla_\theta \alpha_{m+1}(\pi_\theta) \\
    &= -\sum_{m=K+1}^\infty \sum_{j=1}^m
    \alpha_j(\pi_\theta) \nabla_\theta q_j(\pi_\theta)
    \prod_{\ell=j+1}^m (1-q_\ell(\pi_\theta)).
\end{align}
In particular, if $q_m(\pi_\theta) \geq q_{\min} > 0$ and
$\norm{\nabla_\theta q_m(\pi_\theta)} \leq B$ uniformly in $m$,
then
\begin{align}
    &\norm{\nabla_\theta \cL(\pi_\theta) - \nabla_\theta \cL_K(\pi_\theta)} \\
    &\qquad \leq B \sum_{m=K+1}^\infty m(1-q_{\min})^{m-1} \\
    &\qquad = B\left(
        \frac{(K+1)(1-q_{\min})^K}{q_{\min}}
        + \frac{(1-q_{\min})^{K+1}}{q_{\min}^2}
    \right).
\end{align}
Thus, the bias in the policy gradient due to truncation decays
exponentially with $K$.

\subsection{The Effect of Advantage Normalization}
While \Cref{prop:time_invariant_policy_gradient_formal} provides a clear motivation for ReAPO, we note that the loss does not use the GRPO baseline we implement in practice.  The following corollary formalizes the connection between the ReAPO objective and the time-invariant policy gradient form.
\begin{corollary}
    Under the same assumptions as \Cref{prop:time_invariant_policy_gradient_formal}, let $\tau_g$ denote the number of attempts for group $g$ and let
    \begin{align}
    A_{g,k}
    =
    r_{g,k}
    -
    \frac{
    \sum_{g=1}^G\sum_{k=1}^{\tau_g} r_{g,k}
    }{
    \sum_{g=1}^G \tau_g
    }.
    \end{align}
    Suppose that $K = \infty$, and define
    \begin{align}
    \nabla_\theta \cL_{\mathrm{ReAPO-center}}(\pi_\theta)
    :=
    \ee_{\pi_\theta}\left[
    \frac{1}{G}
    \sum_{g=1}^G
    \sum_{k=1}^{\tau_g}
    A_{g,k}
    \nabla_\theta
    \log \pi_\theta(y_{g,k}\mid h_{g,k-1})
    \right].
    \end{align}
    Then, writing $\tau = \sum_{g=1}^G \tau_g$,
    \begin{align}
    \nabla_\theta \cL_{\mathrm{ReAPO-center}}(\pi_\theta)
    =
    \nabla_\theta \cL_{\mathrm{ReAPO}}(\pi_\theta)
    -
    \nabla_\theta
    \ee_{\pi_\theta}\left[\frac{1}{\tau}\right].
    \end{align}
    Consequently,
    \begin{align}
    -\nabla_\theta L(\pi_\theta)
    =
    c(\pi_\theta)
    \left(
    \nabla_\theta L_{\mathrm{center}}(\pi_\theta)
    +
    \nabla_\theta
    \ee_{\pi_\theta}\left[\frac{1}{\tau}\right]
    \right).
    \end{align}
\end{corollary}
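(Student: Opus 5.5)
The plan is to show directly that the centering term contributes exactly $-\nabla_\theta \ee_{\pi_\theta}[1/\tau]$, using two facts: the baseline collapses to a deterministic function of $\tau$, and the sum of per-attempt scores is the score of the whole rollout.

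\textbf{Step 1: the baseline depends only on $\tau$.} Since $K=\infty$ and $\tau_g<\infty$ almost surely, each sequence $g$ stops at its first success. Hence $r_{g,k}=0$ for $k<\tau_g$ and $r_{g,\tau_g}=1$, so $\sum_{g}\sum_{k} r_{g,k} = G$. The baseline therefore equals $G/\tau$ with $\tau=\sum_g \tau_g$. Writing $A_{g,k} = r_{g,k} - G/\tau$ and using linearity of expectation, $\nabla_\theta \cL_{\mathrm{ReAPO-center}}$ splits into $\nabla_\theta \lreapo$ minus
\begin{align}
\ee_{\pi_\theta}\left[ \frac{1}{\tau} \sum_{g=1}^G \sum_{k=1}^{\tau_g} \nabla_\theta \log \pi_\theta(y_{g,k}\mid h_{g,k-1}) \right],
\end{align}
because the factor $G$ cancels the prefactor $1/G$.

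\textbf{Step 2: score-function identity.} The full rollout $\omega = (y_{g,k})_{g\le G,\,k\le \tau_g}$ is a variable-length object. Its law factorizes as $p_\theta(\omega) = \prod_{g}\prod_{k\le\tau_g} \pi_\theta(y_{g,k}\mid h_{g,k-1})$, since each stopping decision is a deterministic function (the verifier) of the generated attempts and does not depend on $\theta$. Consequently the double sum above equals $\nabla_\theta \log p_\theta(\omega)$, and $1/\tau$ is a $\theta$-free function of $\omega$. The likelihood-ratio trick then gives $\ee_{\pi_\theta}[\tau^{-1}\nabla_\theta\log p_\theta(\omega)] = \nabla_\theta \ee_{\pi_\theta}[1/\tau]$, which is the claimed identity. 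Substituting into \Cref{prop:time_invariant_policy_gradient_formal}, i.e.\ $-\nabla_\theta\cL = c(\pi_\theta)\nabla_\theta\lreapo$, and rearranging yields the ``consequently'' statement, reading $L$ as $\cL$ and $L_{\mathrm{center}}$ as $\cL_{\mathrm{ReAPO-center}}$.

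\textbf{Main obstacle.} The delicate point is justifying the interchange of $\nabla_\theta$ with the sum over unboundedly long rollouts. I would write $\ee[1/\tau] = \sum_{n\ge G} n^{-1}\,\pp_{\pi_\theta}(\tau=n)$ and argue term by term. Each event $\{\tau=n\}$ involves only finitely many attempts, so on it the score identity is a finite computation. To interchange the derivative with the sum over $n$, I would use dominated convergence with the bounds from \Cref{prop:policy_gradient_decomposition_formal}. Since $q_m\ge q_{\min}$, the probabilities $\pp(\tau=n)$ decay geometrically. Since $\|\nabla q_m\|\le B$, $\|\nabla_\theta \pp(\tau=n)\|$ is bounded by a polynomial in $n$ times $(1-q_{\min})^{n-G}$, which is summable. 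The weight $1/\tau\le 1/G$ is bounded, so it causes no additional difficulty.
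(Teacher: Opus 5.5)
Your proposal is correct and follows the paper's route: you collapse the pooled baseline to $G/\tau$ using $r_{g,k}=0$ for $k<\tau_g$, split by linearity, and invoke \Cref{prop:time_invariant_policy_gradient_formal} for the ``consequently'' part. Your Step 2 goes beyond the paper, which compresses it into ``linearity of the gradient'': summing the per-attempt scores to the trajectory score, with $1/\tau$ a $\theta$-free function of the rollout, is exactly what yields $\nabla_\theta \ee_{\pi_\theta}[1/\tau]$, and your interchange argument supplies regularity the paper leaves implicit.
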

\begin{proof}
    First note that because $\tau_g < \infty$ almost surely and $r_{g,k} = 0$ for $k < \tau_g$, it holds that
    \begin{align}
        \sum_{g=  1}^G \sum_{k = 1}^{\tau_g} r_{g,k} = G.
    \end{align}
    Thus,
    \begin{align}
        A_{g,k} = r_{g,k} - \frac{G}{\tau},
    \end{align}
    and the result follows from \Cref{prop:time_invariant_policy_gradient_formal} and the linearity of the gradient.
\end{proof}
Note that the centering by the mean, as done in $\cL_{\mathrm{ReAPO-center}}$ thus does not exactly correspond to minimizing $\ee_{\pi_\theta}[\tau - 1]$, but that the correction term $\nabla_\theta \ee_{\pi_\theta}\left[\frac{1}{\tau}\right]$ accounts for this discrepancy and also serves to encourage $\tau$ to be as small as possible.

While all of the above theory was developed in the context of unnormalized advantages, i.e. using rewards directly without centring or dividing by the standard deviation, in practice we use advantage normalization, dividing by the standard deviation of the rewards in addition to centring by the mean, as is done in GRPO \citep{shao2024deepseekmathpushinglimitsmathematical}.  Following Dr.GRPO \cite{liu2025understandingr1zeroliketrainingcritical}, we can analyze the effect of this normalization on the bias and variance of the policy gradient estimator.  In the setting of interest where rewards are binary and the probability of a given attempt being correct is $\mu_K$, then it holds that the standard deviation is $\sigma_K = \sqrt{\mu_K (1 - \mu_K)}$.  In the limit as $K \uparrow \infty$, it holds that
\begin{align}
    \mu_\infty = \frac{G}{\sum_{g = 1}^G \tau_g}.
\end{align}
Thus, as $G \to \infty$, it holds that $\mu_\infty \to \nicefrac{1}{\ee_\theta[\tau]}$.
Thus for each prompt $x$ in the limit as $K,G \uparrow \infty$, the standard deviation of the rewards is
\begin{align}
    \sigma_\infty(x) = \frac{\sqrt{\ee\left[ \tau(x) - 1 \right]}}{\ee\left[ \tau(x) \right]},
\end{align}
where we have now restored the dependence on the prompt to allow for considerations of multiple prompts and their respective attempt distributions.  Thus the standard deviation normalization has no effect on the direction of the gradient, but does lead to a rescaling of the gradient magnitude, about on the order of $\sqrt{\ee[\tau]}$ for each prompt $x$.  In particular, more difficult problems with larger expected numbers of attempts will have their policy gradient contributions up-weighted relative to easier problems with smaller expected numbers of attempts.

\end{document}